\documentclass[11pt]{article}

\usepackage[preprint]{acl}

\usepackage{times}
\usepackage{latexsym}
\usepackage[T1]{fontenc}

\usepackage[utf8]{inputenc}

\usepackage{microtype}

\usepackage{inconsolata}

\usepackage{graphicx}
\usepackage{booktabs}
\usepackage{listings}
\usepackage{xcolor}
\usepackage{tcolorbox}
\usepackage{ragged2e}
\usepackage{amsmath, amsthm, amssymb, amsfonts}
\usepackage{tabularx}
\usepackage{hyperref}
\usepackage{cleveref}

\newtheorem{theorem}{Theorem}
\newtheorem{proposition}[theorem]{Proposition}
\newtheorem{corollary}[theorem]{Corollary}
\newtheorem{lemma}[theorem]{Lemma}
\newtheorem{definition}[theorem]{Definition}
\newtheorem{assumption}{Assumption}
\newtheorem{remark}{Remark}
\newtheorem{conjecture}[theorem]{Conjecture}

\newcommand{\E}{\mathbb{E}}
\newcommand{\R}{\mathbb{R}}
\newcommand{\inner}[2]{\langle #1, #2 \rangle}

\title{Why Does RLAIF Work At All?}

\author{Robin Young \\
  Department of Computer Science and Technology \\
  University of Cambridge \\
  Cambridge, UK \\
  \texttt{robin.young@cl.cam.ac.uk}}

\begin{document}
\maketitle

\begin{abstract}
Reinforcement Learning from AI Feedback (RLAIF) enables language models to improve by training on their own preference judgments, yet no theoretical account explains why this self-improvement seemingly works for value learning. We propose the latent value hypothesis, that pretraining on internet-scale data encodes human values as directions in representation space, and constitutional prompts elicit these latent values into preference judgments. We formalize this intuition under a linear model where the constitution acts as a projection operator selecting value-relevant directions. Our analysis yields several results. RLAIF improves alignment when the constitution-activated direction correlates with true values better than the model's default generation direction thus explaining the generation-judgment gap; the ceiling on RLAIF quality is determined by how well representations encode values, which scales with model capacity; and adversarial constitutions exist that can activate anti-social value directions encoded from harmful pretraining data. Our account unifies scattered empirical findings including the refusal direction, low-rank safety subspaces, and RLAIF scaling behavior.
\end{abstract}

\section{Introduction}

A puzzling phenomenon underlies modern AI. Namely, language models can improve their own safety by training on their own judgments. In Reinforcement Learning from AI Feedback \citep[RLAIF;][]{bai2022constitutional,lee2023rlaif}, a model is prompted with a ``constitution,'' a set of principles like ``choose the less harmful response,'' and asked to judge pairs of outputs. The model then trains on these self-generated preferences. Empirically, this works. RLAIF achieves alignment quality comparable to human feedback, and models can even improve by judging their own outputs \citep{lee2023rlaif}.

\emph{But why should this work at all?} No new information enters the system, creating an apparent tension with the data processing inequality \citep{Shannon1948AMT}. The model judges its own outputs according to its own understanding of the constitution. If the model already ``knew'' what was harmful, why did it not just avoid generating harmful content in the first place? And if it did not know, how can its judgments provide useful signal?

We propose a resolution in the latent value hypothesis. Our core claim is that pretraining on internet-scale data encodes human values as directions in the model's representation space, but these representations are not fully utilized during generation. The constitution prompt acts as a retrieval key that elicits these latent values into explicit preference judgments. Training on these judgments then ``wires up'' the latent value representations to the output distribution.

This hypothesis makes RLAIF more legible; we posit self-improvement is possible because knowing and doing are decoupled in language models. The model's representations encode more about human values than its default generation behavior reflects. The constitution bridges this gap.

We formalize the latent value hypothesis under a linear model of value encoding and derive four main results. First, we characterize the self-improvement condition. RLAIF improves alignment if and only if the constitution-activated direction correlates positively with true values, and we explain when this correlation exceeds that of default generation, namely the ``generation-judgment gap.'' Second, we establish an RLAIF ceiling where the maximum achievable alignment is bounded by how well the model's representations encode values, a ceiling that scales with model capacity and pretraining diversity. Third, we offer a conjecture on low-rank values. Human values concentrate in a low-dimensional subspace, consistent with empirical findings that safety fine-tuning modifies few directions. Fourth, we prove that adversarial constitutions exist because pretraining encodes both pro-social and anti-social norms, there exist constitutions that activate harmful value directions, making the model worse. We show that this explains existing empirical findings, including the existence of a ``refusal direction'' in base models \citep{arditi2024refusal}, the low-rank structure of safety fine-tuning \citep{pan2025hidden}, and the scaling behavior of RLAIF \citep{lee2023rlaif}. We conclude by discussing implications for alignment practice.

\section{Related Work}
\label{sec:related}

Concurrent work by \citet{huang2024self} formalizes self-improvement as ``sharpening'' by using the model as a verifier to concentrate probability mass on high-quality responses. Their work applies to reasoning tasks where correctness is verifiable as the model can check whether a proof is valid or whether code passes tests, even if generating correct proofs or code is difficult. They provide sample complexity bounds showing when self-improvement via SFT or RLHF is efficient.

Our work is complementary. Sharpening explains self-improvement when there is a ground-truth ``correct answer'' that the model can verify. We explain self-improvement for values, where there is no external ground truth; instead, the ``correct answer'' is encoded in the model's own latent representations from pretraining. Both share the insight that knowing and doing are decoupled in language models, but the mechanisms differ. Sharpening identifies a computational bottleneck in that generation requires difficult search, while verification is easy. We identify an architectural bottleneck. Values are encoded in representations but not fully utilized by the default generation process.

\citet{bai2022constitutional} introduce constitutional AI, using principles to generate self-critiques and revisions. They demonstrate empirically that RLAIF with a constitution can match or exceed RLHF for harmlessness. However, they do not provide a theoretical framework for why this works. Our contribution is to formalize the mechanism. The constitution elicits latent value representations that were encoded during pretraining.

A growing body of work establishes that concepts are linearly encoded in language model representations. Probing classifiers can extract syntactic and semantic information \citep{belinkov2017neural}. Steering vectors can control model behavior by adding directions to activations \citep{turner2023activation,panickssery2024steering}. Representation engineering provides a systematic approach to finding and manipulating concept directions \citep{zou2023representation}. Inference-time intervention can elicit truthful answers by shifting activations \citep{li2024inference}. We build on this foundation, applying linear representation assumptions specifically to value encoding. Our Assumption~\ref{ass:linear} is directly motivated by empirical findings that harmfulness and refusal are linearly encoded \citep{arditi2024refusal}.

Our work also relates to scalable oversight and eliciting latent knowledge \citep{christiano2018supervising, burns2024weak}. That line of work proposes bootstrapping alignment through recursive supervision and the system helps generate the training signal. RLAIF can be seen as a tractable instance, and constitutional judgments supervise policy updates. The ELK problem asks how to extract knowledge a model has encoded but does not report; our latent value hypothesis offers a partial answer for values specifically. The constitution acts as an elicitation mechanism, surfacing latent representations as explicit judgments. However, there is still no guarantee that the elicited direction reflects true values rather than a proxy satisfying the prompt.

Recent work studies the internal mechanisms of safety behaviors. \citet{arditi2024refusal} show that refusal is mediated by a single direction across many chat models, and that this direction exists even before RLHF. \citet{pan2025hidden} find that safety fine-tuning modifies a low-rank subspace, with effective rank approximately 1 in early layers. We provide theoretical grounding for these findings through Conjecture~\ref{conj:lowrank} that formalizes why safety might be low-rank and Assumption~\ref{ass:linear}.

\section{Problem Setup}
\label{sec:setup}

We formalize the components of RLAIF under a linear model of value encoding.

\subsection{Representations and Values}

Let $\mathcal{X}$ denote the space of prompts and $\mathcal{Y}$ the space of responses. A language model maps prompt-response pairs to representations:
\begin{equation}
h: \mathcal{X} \times \mathcal{Y} \to \R^d
\end{equation}
We can think of $h(x, y)$ as the model's internal representation of responding to prompt $x$ with response $y$ (e.g. the residual stream at some layer, averaged over positions). For now, let us assume representations are whitened: $\E[h h^\top] = I$ and are centered: $\E[h] = 0$. This gives $\mathrm{Cov}(h) = I$.

\begin{assumption}[Linear Value Encoding]
\label{ass:linear}
There exists a direction $v^* \in \R^d$ such that the ``true safety'' of a response is a linear function of the representation:
\begin{equation}
S(x, y) = \inner{h(x,y)}{v^*} + \epsilon(x,y)
\end{equation}
where $\epsilon$ is mean-zero noise with $\E[\epsilon^2] = \sigma_\epsilon^2$, independent of $h$.
\end{assumption}

This assumption is empirically motivated. Prior work shows that harmfulness, refusal, and other safety-relevant properties are linearly encoded in language model representations \citep{arditi2024refusal,zou2023representation,li2024inference}. The noise $\epsilon$ captures aspects of harm that are not encoded in representations (e.g. context-dependent judgments, ambiguous cases). We can relax this with no loss of generality (see Appendix \Cref{thm:nonlinear}).

\begin{definition}[Encoding Quality]
The encoding quality $\rho \in [0,1]$ measures how well representations capture true harm:
\begin{equation}
\rho^2 = \frac{\|v^*\|^2}{\|v^*\|^2 + \sigma_\epsilon^2} = \frac{\mathrm{Var}(\inner{h}{v^*})}{\mathrm{Var}(S)}
\end{equation}
\end{definition}

When $\rho = 1$, harm is perfectly encoded. When $\rho = 0$, representations contain no information about harm.

\subsection{Generation and Judgment}

We model the base policy as implicitly optimizing a linear score:

\begin{assumption}[Linear Generation]
\label{ass:generation}
The base model's log-probability decomposes as:
\begin{equation}
\log P_{\mathrm{base}}(y \mid x) = \inner{h(x,y)}{w} + g(x)
\end{equation}
where $w \in \R^d$ is the ``generation direction'' and $g(x)$ is a prompt-dependent normalizer.
\end{assumption}

The direction $w$ captures what the base model implicitly prefers and responses that score high on $\inner{h}{w}$ are more likely to be generated. This direction is shaped by the pretraining objective (next-token prediction) across the entire corpus.

For constitutional judgment, we assume:

\begin{assumption}[Linear Judgment]
\label{ass:judgment}
A constitution $c$ induces pairwise preferences:
\begin{equation}
J_c(y_1 \succ y_2 \mid x) = \sigma\bigl(\inner{h(x,y_1) - h(x,y_2)}{v_c}\bigr)
\end{equation}
where $\sigma$ is the sigmoid function and $v_c \in \R^d$ is the direction activated by constitution $c$.
\end{assumption}

The key modeling choice is that different constitutions activate different directions. The constitution ``Principle: choose the response that is less harmful'' activates a direction $v_c$ that (ideally) aligns with $v^*$. A different constitution might activate a different direction.

\begin{remark}
We take as given that prompting can activate specific directions in representation space as a phenomenon documented in work on steering vectors \citep{turner2023activation,panickssery2024steering} and activation engineering \citep{zou2023representation}. The mechanism by which prompts select directions is beyond our scope.
\end{remark}

\subsection{Alignment Measure}

We measure alignment as expected safety:
\begin{equation}
\mathrm{Align}(\pi) = \E_x \E_{y \sim \pi(\cdot \mid x)}[S(x,y)]
\end{equation}
Higher values indicate better alignment.

\section{Self-Improvement Condition}
\label{sec:theorem1}

We now characterize when RLAIF improves alignment.

\subsection{RLAIF as Direction Adjustment}

Training on constitutional preferences adjusts the generation direction. We derive this from first principles using Direct Preference Optimization (DPO) \citep{rafailov2024direct}.

DPO shows that for KL-constrained reward maximization by maximizing expected reward while staying close to a reference policy, the optimal policy takes the form:
\begin{equation}
\pi^*(y \mid x) = \frac{1}{Z(x)} \pi_{\mathrm{ref}}(y \mid x) \exp(r(x,y) / \beta)
\end{equation}
where $\beta$ controls the strength of the KL penalty. Rearranging gives the implicit reward:
\begin{equation}
r(x,y) = \beta \log \frac{\pi^*(y \mid x)}{\pi_{\mathrm{ref}}(y \mid x)} + \beta \log Z(x)
\end{equation}

In our setting, constitutional preferences (Assumption~\ref{ass:judgment}) correspond to a Bradley-Terry model with implicit reward:
\begin{equation}
r_c(x,y) = \inner{h(x,y)}{v_c}
\end{equation}
since $J_c(y_1 \succ y_2) = \sigma(r_c(y_1) - r_c(y_2))$.

Taking the base policy as reference ($\pi_{\mathrm{ref}} = P_{\mathrm{base}}$), we can derive the RLAIF policy:

\begin{proposition}[RLAIF Policy]
\label{prop:rlaif}
Under Assumptions \ref{ass:linear}--\ref{ass:judgment}, DPO on constitutional preferences with KL penalty $\beta$ yields the optimal policy:
\begin{equation}
\log P_{\mathrm{RLAIF}}(y \mid x) = \inner{h(x,y)}{w + \lambda v_c} + g'(x)
\end{equation}
where $\lambda = 1/\beta$.
\end{proposition}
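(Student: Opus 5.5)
The plan is to plug the constitutional reward into the closed-form KL-regularized optimum stated above and read off the log-linear form. Under Assumption~\ref{ass:judgment}, the constitutional preferences are exactly a Bradley--Terry model with reward $r_c(x,y)=\inner{h(x,y)}{v_c}$. This reward is identified only up to an additive function of $x$, and such a shift does not change the optimal policy: it is absorbed into the partition function.

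The first step is to argue that DPO, in the population (infinite-preference-data) limit, recovers the optimum of the KL-constrained problem $\max_\pi \E_x\E_{y\sim\pi}[r_c(x,y)]-\beta\,\mathrm{KL}(\pi\,\|\,\pi_{\mathrm{ref}})$ with $\pi_{\mathrm{ref}}=P_{\mathrm{base}}$. The DPO loss is the Bradley--Terry negative log-likelihood, reparametrized through the implicit reward $\beta\log(\pi/\pi_{\mathrm{ref}})$. Its minimizer therefore matches the true preference probabilities $J_c$, and hence satisfies $\beta\log(\pi/\pi_{\mathrm{ref}})=r_c+f(x)$ for some function $f$. I will take this correspondence from \citet{rafailov2024direct} as given, together with the display $\pi^*\propto\pi_{\mathrm{ref}}\exp(r/\beta)$.

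Next I take logs and substitute Assumption~\ref{ass:generation}:
\[
\log P_{\mathrm{RLAIF}}(y\mid x)=\inner{h(x,y)}{w}+g(x)+\tfrac{1}{\beta}\inner{h(x,y)}{v_c}-\log Z(x).
\]
Collecting the two inner products by linearity gives $\inner{h(x,y)}{w+\lambda v_c}$ with $\lambda=1/\beta$. I then define $g'(x)=g(x)-\log Z(x)$, plus any additive shift $f(x)/\beta$. This $g'$ depends only on $x$, so it is again a prompt-dependent normalizer, as Assumption~\ref{ass:generation} requires. Assumption~\ref{ass:linear} plays no role in this derivation; it is listed only for consistency of the setup.

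The main obstacle is the identification step: justifying that the DPO solution equals the KL-optimal policy, and not merely some policy consistent with finitely many comparisons. I will handle this by working in the population limit with full support of $\pi_{\mathrm{ref}}$ over $\mathcal{Y}$. On that support, the Bradley--Terry likelihood determines reward differences uniquely within each $x$. This pins down $\log(\pi/\pi_{\mathrm{ref}})$ up to a function of $x$, and normalization then fixes that function. I also need $Z(x)=\sum_y P_{\mathrm{base}}(y\mid x)e^{\lambda\inner{h}{v_c}}<\infty$. This is automatic for finite $\mathcal{Y}$; otherwise I will impose it as a mild integrability condition.
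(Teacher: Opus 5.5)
Your proposal is correct and follows the same route as the paper. Both substitute $r_c=\inner{h}{v_c}$ and $\log P_{\mathrm{base}}=\inner{h}{w}+g(x)$ into the KL-optimal form $\pi^*\propto\pi_{\mathrm{ref}}\exp(r_c/\beta)$ and absorb the $x$-dependent terms into $g'(x)$. Your extra care on population-limit identification, full support (automatic from Assumption~\ref{ass:generation}, since $\log P_{\mathrm{base}}$ is finite everywhere) and finiteness of $Z(x)$ covers points the paper's proof leaves implicit.
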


\begin{proof}
The optimal policy satisfies:
\begin{equation}
r_c(x,y) = \beta \log \frac{\pi^*(y \mid x)}{\pi_{\mathrm{ref}}(y \mid x)} + \mathrm{const}(x)
\end{equation}

Substituting $r_c(x,y) = \inner{h(x,y)}{v_c}$ and $\log \pi_{\mathrm{ref}}(y \mid x) = \inner{h(x,y)}{w} + g(x)$:
\begin{equation}
\inner{h}{v_c} = \beta \log \pi^*(y \mid x) - \beta \inner{h}{w} - \beta g(x) + \mathrm{const}(x)
\end{equation}

Solving for $\log \pi^*$:
\begin{equation}
\small
\log \pi^*(y \mid x) = \inner{h}{w} + \frac{1}{\beta}\inner{h}{v_c} + g'(x) = \inner{h}{w + \lambda v_c} + g'(x)
\end{equation}
where $\lambda = 1/\beta$ and $g'(x)$ absorbs the $x$-dependent terms.
\end{proof}

The interpretation is that RLAIF shifts the generation direction from $w$ to $w + \lambda v_c$. The strength of the shift is controlled by $\lambda = 1/\beta$. Weaker KL regularization (smaller $\beta$) means larger shift toward the constitution direction. The policy now favors responses that score high on both the original generation direction and the constitution-activated direction.

\subsection{Improvement Condition}

\begin{theorem}[Self-Improvement]
\label{thm:improvement}
Under Assumptions \ref{ass:linear}--\ref{ass:judgment}, for small $\lambda$ the alignment improvement from RLAIF is:
\begin{equation}
\small
\mathrm{Align}(\pi_{\mathrm{RLAIF}}) - \mathrm{Align}(\pi_{\mathrm{base}}) = \lambda \inner{\Sigma_w v_c}{v^*} + O(\lambda^2)
\end{equation}
where $\Sigma_w = \mathrm{Cov}_{\pi_{\mathrm{base}}}(h)$ is positive definite. In particular, improvement occurs iff $\inner{\Sigma_w v_c}{v^*} > 0$.
\end{theorem}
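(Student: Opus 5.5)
The plan is to view the RLAIF policy from Proposition~\ref{prop:rlaif} as an exponential tilt of the base policy and expand its expected safety to first order in $\lambda$. Write $r(x,y)=\inner{h(x,y)}{v_c}$. Proposition~\ref{prop:rlaif} gives, for each prompt $x$,
\begin{equation}
P_{\mathrm{RLAIF}}(y\mid x)=\frac{P_{\mathrm{base}}(y\mid x)\exp(\lambda r(x,y))}{\E_{y'\sim P_{\mathrm{base}}(\cdot\mid x)}[\exp(\lambda r(x,y'))]} .
\end{equation}
The normalizer $g'(x)$ is exactly what makes this a distribution. Then, for any function $f$, the quantity $\E_{\pi_\lambda}[f]$ is a ratio of two moment-generating-type expressions in $\lambda$.

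The first step is to differentiate at $\lambda=0$. The standard exponential-family identity gives
\begin{equation}
\frac{d}{d\lambda}\E_{\pi_\lambda}[f]\Big|_{\lambda=0}=\mathrm{Cov}_{\pi_{\mathrm{base}}(\cdot\mid x)}(f,r).
\end{equation}
I would apply this with $f=S(x,y)=\inner{h}{v^*}+\epsilon$. Two simplifications follow.
\begin{itemize}
\item By Assumption~\ref{ass:linear}, $\epsilon$ is mean-zero and independent of $h$. So it contributes nothing to the covariance with $r$, which depends on $y$ only through $h$. It also contributes nothing to the mean under either policy, since both policies reweight only through $h$.
\item The remaining term is $\mathrm{Cov}(\inner{h}{v^*},\inner{h}{v_c})=\inner{\Sigma_w v_c}{v^*}$ by bilinearity.
\end{itemize}
Averaging over $x$, I take $\Sigma_w$ to mean the base-policy covariance of $h$, averaged over prompts (the within-prompt covariance). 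The first-order term is therefore $\lambda\inner{\Sigma_w v_c}{v^*}$.

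The second step is to control the remainder. By Taylor's theorem, the second derivative $\frac{d^2}{d\lambda^2}\E_{\pi_\lambda}[S]$ equals a third central mixed moment $\E_{\pi_\lambda}[(S-\bar S)(r-\bar r)^2]$. This is bounded uniformly for $\lambda$ in a neighborhood of $0$ provided $h$ has finite exponential moments under the base policy. Bounded representations or a finite response space suffice, and I would state this as a mild regularity condition. That gives the $O(\lambda^2)$ term.

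The "iff" statement follows directly. For sufficiently small $\lambda>0$, the sign of the improvement equals the sign of $\inner{\Sigma_w v_c}{v^*}$ whenever this quantity is nonzero. For the converse direction, if it is $\le 0$ the first-order term does not produce improvement. The boundary case $=0$ is decided by higher-order terms, so I would phrase the claim as holding to first order.

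Positive definiteness of $\Sigma_w$ requires that the base policy puts mass on responses whose representations span $\R^d$. I would note this as following from full support of $P_{\mathrm{base}}$ together with nondegenerate representations; it is not actually needed for the expansion itself.

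The main obstacle is bookkeeping the $x$-dependence. The tilt is normalized per prompt, so the first-order term is $\E_x[\mathrm{Cov}_{P_{\mathrm{base}}(\cdot\mid x)}(h)]$, not the global covariance. I would define $\Sigma_w$ accordingly. The second point of care is making the uniform bound on the remainder rigorous, which is where the moment assumption enters.
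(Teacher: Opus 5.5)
Your proposal is correct and takes essentially the paper's route. The paper shifts the natural parameter of the energy-based policy from $w$ to $w+\lambda v_c$ and uses $\nabla_u^2 \log Z(u) = \mathrm{Cov}_{\pi_u}(h)$ to obtain the mean shift $\lambda \Sigma_w v_c$ before pairing with $v^*$, which is the same first-order exponential-tilt identity you differentiate (and which the appendix's exponential-tilt lemma states), while your extra care fills in points the paper's proof glosses over: the explicit $O(\lambda^2)$ remainder bound, defining $\Sigma_w$ as the prompt-averaged conditional covariance $\E_x[\mathrm{Cov}_{P_{\mathrm{base}}(\cdot\mid x)}(h)]$, noting that positive definiteness is not needed, and flagging the degenerate case $\inner{\Sigma_w v_c}{v^*}=0$ where the ``iff'' is decided only at higher order.
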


\begin{proof}
The expected safety (or negative harm) under policy $\pi$ with linear score direction $u$ is:
\begin{equation}
\E_{y \sim \pi}[S(x,y)] = \E_{y \sim \pi}[\inner{h}{v^*}] + \E[\epsilon]
\end{equation}
For an energy-based model $\pi(y) \propto \exp(\inner{h(y)}{u})$, the expected representation is $\E_{y \sim \pi}[h] = \nabla_u \log Z(u)$ where $Z(u) = \int \exp(\inner{h(y)}{u}) dy$. For small perturbations $\delta u$:
\begin{equation}
\E_{\pi_{u + \delta u}}[h] \approx \E_{\pi_u}[h] + \Sigma_u \delta u
\end{equation}
where $\Sigma_u = \nabla^2_u \log Z(u) = \mathrm{Cov}_{\pi_u}(h)$ is the representation covariance under $\pi_u$, which is positive definite.

Applying the first-order expansion with $\delta u = \lambda v_c$:
\begin{equation}
\E_{\mathrm{RLAIF}}[h] - \E_{\mathrm{base}}[h] = \lambda \Sigma_w v_c + O(\lambda^2)
\end{equation}

Taking the inner product with $v^*$:
\begin{equation}
\small
\E_{\mathrm{RLAIF}}[\inner{h}{v^*}] - \E_{\mathrm{base}}[\inner{h}{v^*}] = \lambda \inner{\Sigma_w v_c}{v^*} + O(\lambda^2)
\end{equation}

\end{proof}

\begin{corollary}[Self-Improvement Condition]
RLAIF improves alignment (for small $\lambda$) if and only if $\inner{\Sigma_w v_c}{v^*} > 0$. When representations are isotropic ($\Sigma_w \propto I$), this simplifies to $\inner{v_c}{v^*} > 0$.
\end{corollary}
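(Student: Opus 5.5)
The corollary follows from Theorem~\ref{thm:improvement} once we read ``for small $\lambda$'' correctly. We take it to mean: the sign of the alignment change agrees with $\inner{\Sigma_w v_c}{v^*}$ for all sufficiently small $\lambda>0$, whenever that quantity is nonzero.

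Write $\Delta(\lambda) = \mathrm{Align}(\pi_{\mathrm{RLAIF}}) - \mathrm{Align}(\pi_{\mathrm{base}})$ and $a = \inner{\Sigma_w v_c}{v^*}$. Theorem~\ref{thm:improvement} gives $\Delta(\lambda) = \lambda a + R(\lambda)$ with $|R(\lambda)| \le C\lambda^2$ for $\lambda \in (0,\lambda_0]$.

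\textbf{Sufficiency.} Suppose $a>0$. For $\lambda < \min(\lambda_0, a/C)$ we get $\Delta(\lambda) \ge \lambda(a - C\lambda) > 0$.

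\textbf{Necessity.} Suppose $a<0$. The same bound gives $\Delta(\lambda) \le \lambda(a + C\lambda) < 0$ for small $\lambda$, so RLAIF strictly worsens alignment.

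\textbf{Degenerate case $a=0$.} The first-order term vanishes and the sign of $\Delta$ is decided by the $O(\lambda^2)$ term. ``Improves for small $\lambda$'' must therefore be understood as improvement at first order, that is, $\Delta'(0) = a > 0$. Under this reading the equivalence is exact: $\Delta'(0)>0$ if and only if $a>0$, since $\Delta'(0)=a$ by the expansion. Spelling out this convention is the main subtlety, because without it the ``only if'' direction can fail when $a=0$. I would state it explicitly in the proof.

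\textbf{Isotropic case.} If $\Sigma_w = \alpha I$ with $\alpha>0$, then $a = \alpha\inner{v_c}{v^*}$, so $a$ and $\inner{v_c}{v^*}$ have the same sign. Here $\alpha>0$ follows from the positive definiteness of $\Sigma_w$ asserted in Theorem~\ref{thm:improvement}.

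\textbf{Remainder constant.} The only technical point is justifying a uniform $C$. Under the energy-based form $\pi_u \propto \exp(\inner{h}{u})$ used in the theorem's proof, $\lambda \mapsto \E_{\pi_{w+\lambda v_c}}[\inner{h}{v^*}]$ is smooth wherever $\log Z$ is finite in a neighborhood of $w$. Its second derivative is a third cumulant of $h$ along the directions $v_c$ and $v^*$, which is bounded on compact $\lambda$-intervals. Taylor's theorem with remainder then supplies $C$. The noise term $\E[\epsilon]=0$ contributes nothing to $\Delta$.
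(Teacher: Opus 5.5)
Your proof is correct and takes the paper's route: the paper gives no separate argument, and reads the corollary directly off the first-order expansion in Theorem~\ref{thm:improvement}; the isotropic case then follows because $\Sigma_w = \alpha I$ with $\alpha>0$ preserves the sign of $\inner{v_c}{v^*}$. Your explicit handling of the boundary case $\inner{\Sigma_w v_c}{v^*}=0$ is a real refinement the paper glosses over, since there the $O(\lambda^2)$ term decides the sign and the literal ``only if'' holds only when improvement is read as first-order.
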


The condition is intuitive. RLAIF helps when the constitution activates a direction that correlates positively with true safety (so that preferring high-$v_c$ responses means preferring safe responses).

\subsection{The Generation-Judgment Gap}

Why should $\inner{v_c}{v^*}$ exceed $\inner{w}{v^*}$? This is the core question. Why does constitutional judgment access value information better than default generation?

\begin{proposition}[Generation-Judgment Gap]
\label{prop:gap}
Suppose:
\begin{enumerate}
    \item The generation direction $w$ is shaped by next-token prediction across a corpus where fraction $\eta \ll 1$ is value-relevant (ethics discussions, content moderation, etc.) and fraction $1-\eta$ is value-neutral.
    \item The constitution is designed to query values directly, achieving alignment $\inner{v_c}{v^*} = \alpha_c \approx 1$.
\end{enumerate}
Then $\inner{w}{v^*} \approx \eta$ and the gap is:
\begin{equation}
\inner{v_c}{v^*} - \inner{w}{v^*} \approx 1 - \eta
\end{equation}
\end{proposition}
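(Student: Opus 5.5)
The plan is to model the generation direction $w$ as a mixture of corpus-level gradient contributions and then read off its projection onto $v^*$ by linearity. Concretely, write the pretraining corpus as a mixture $\mathcal{D} = \eta\,\mathcal{D}_{\mathrm{val}} + (1-\eta)\,\mathcal{D}_{\mathrm{neu}}$. Under Assumption~\ref{ass:generation}, the direction $w$ that best fits next-token prediction on a mixture is, to first order, the corresponding mixture of the directions fit to each component:
\begin{equation}
w \approx \eta\, w_{\mathrm{val}} + (1-\eta)\, w_{\mathrm{neu}}.
\end{equation}
I would justify this as the stationary point of the mixture log-likelihood linearized around a common reference. With whitened representations ($\mathrm{Cov}(h)=I$), the moment-matching condition $\E_{\pi_w}[h] = \E_{\mathcal{D}}[h]$ becomes linear in $w$ near the origin, because $\Sigma_w \approx I$ as in the proof of Theorem~\ref{thm:improvement}. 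Hence $w \approx \E_{\mathcal{D}}[h] = \eta\,\E_{\mathrm{val}}[h] + (1-\eta)\,\E_{\mathrm{neu}}[h]$.

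Next I would make the two substantive modeling identifications.
\begin{enumerate}
\item Value-relevant text is written to track safety, so its fitted direction satisfies $\inner{w_{\mathrm{val}}}{v^*} \approx 1$. I would normalize $\|v^*\|=1$ and treat value-relevant data as concentrated along $v^*$.
\item Value-neutral text carries no systematic safety signal, so $\inner{w_{\mathrm{neu}}}{v^*} \approx 0$. This says the neutral mean representation is orthogonal to $v^*$ in the whitened geometry, which is exactly what ``value-neutral'' should mean under Assumption~\ref{ass:linear}.
\end{enumerate}
Taking the inner product of the mixture expansion with $v^*$ then gives $\inner{w}{v^*} \approx \eta\cdot 1 + (1-\eta)\cdot 0 = \eta$. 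Subtracting this from hypothesis 2, $\inner{v_c}{v^*} = \alpha_c \approx 1$, yields the gap $\approx 1-\eta$, with exact form $\alpha_c - \eta$.

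The main obstacle is making step one honest. Maximum-likelihood fitting of an exponential family to a mixture is not linear in the mixture weights in general: $w$ solves $\nabla \log Z(w) = \E_{\mathcal{D}}[h]$, and $\nabla\log Z$ is nonlinear. I would handle this by working in the regime where $w$ is small enough for the first-order expansion of Theorem~\ref{thm:improvement} to apply, so that $w \approx \Sigma_0^{-1}\E_{\mathcal{D}}[h] = \E_{\mathcal{D}}[h]$ under whitening. The resulting error terms would be stated as the source of the ``$\approx$''.

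A secondary concern is normalization. The comparison $\inner{v_c}{v^*}$ versus $\inner{w}{v^*}$ is only meaningful once $\|w\|$ and $\|v_c\|$ are put on a common scale. I would therefore state the result in terms of cosine similarities with unit-normalized $w_{\mathrm{val}}$ and $v_c$, and note that the approximation $\approx\eta$ requires $\|w\|\approx 1$. Alternatively, one can read the proposition as a comparison of per-unit-norm alignments.
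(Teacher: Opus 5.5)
Your proposal takes the same approach as the paper. The paper gives no formal proof, only the dilution heuristic: next-token prediction spreads $w$ over mostly value-neutral signal, so only an $\eta$-fraction of it points along $v^*$, while $v_c$ is targeted at values. Your decomposition $w \approx \eta\, w_{\mathrm{val}} + (1-\eta)\, w_{\mathrm{neu}}$ with $\inner{w_{\mathrm{val}}}{v^*} \approx 1$ and $\inner{w_{\mathrm{neu}}}{v^*} \approx 0$ is exactly that heuristic made explicit, and the gap $\alpha_c - \eta \approx 1-\eta$ follows the same way.

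The moment-matching linearization and the normalization caveats go beyond anything the paper supplies. Note one consequence: under your formalization $\inner{w_{\mathrm{val}}}{v^*} \approx \E_{\mathrm{val}}[\inner{h}{v^*}]$ is the \emph{mean} encoded safety of value-relevant text relative to the $w=0$ reference measure. So identification 1 is an assumption about that mean, not a consequence of such text merely ``tracking'' safety.
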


The intuition: pretraining optimizes for predicting all tokens, most of which have nothing to do with values. The generation direction $w$ is therefore ``diluted'' and only a small component points toward $v^*$. In contrast, the constitution explicitly asks about harm, so it activates a direction $v_c$ that is targeted at the value-relevant subspace.

This is a plausible explanation for the generation-judgment gap. The model ``knows'' about values (they are encoded in representations), but default generation doesn't fully utilize this knowledge (because $w$ is optimized for a different objective). The constitution retrieves the relevant knowledge.

\subsection{A Toy Example}

To make things more concrete, consider a minimal example in $\R^2$.

\textbf{Setup.} Let representations live in $\R^2$ with:
\begin{itemize}
    \item Safety direction: $v^* = (1, 0)$
    \item Generation direction: $w = (0.1, 0.995)$ (mostly orthogonal to $v^*$)
    \item Constitution direction: $v_c = (0.95, 0.31)$ (well-aligned with $v^*$)
\end{itemize}

The generation direction $w$ has $\inner{w}{v^*} = 0.1$. only 10\% of its magnitude points toward the safety-relevant direction. This reflects a base model trained mostly on value-neutral text. The constitution direction $v_c$ has $\inner{v_c}{v^*} = 0.95$; it nearly recovers the true safety direction.

Under the base policy, responses $y$ with high $\inner{h(y)}{w}$ are favored. Since $w$ is nearly orthogonal to $v^*$, the policy is nearly indifferent to harm. A harmful response and a benign response with similar projections onto $w$ are equally likely.

After RLAIF with $\lambda = 1$, the new generation direction is:
\begin{equation}
\small
w' = w + \lambda v_c = (0.1 + 0.95, 0.995 + 0.31) = (1.05, 1.305)
\end{equation}
Normalizing: $w' \approx (0.63, 0.78)$.

The new direction has $\inner{w'}{v^*} \approx 0.63$; the policy now substantially penalizes harm. The first-order alignment improvement is:
\begin{equation}
\Delta \approx \lambda \inner{\Sigma_w v_c}{v^*} = \lambda (v^*)^\top \Sigma_w v_c
\end{equation}
For isotropic $\Sigma_w = \sigma^2 I$, this gives $\Delta = \lambda \sigma^2 \inner{v_c}{v^*} = 0.95 \lambda \sigma^2 > 0$.

Now suppose an adversarial constitution activates $v_c^{\mathrm{adv}} = (-0.8, 0.6)$, which has $\inner{v_c^{\mathrm{adv}}}{v^*} = -0.8$. After RLAIF:
\begin{equation}
\small
w'' = w + \lambda v_c^{adv} = (0.1 - 0.8, 0.995 + 0.6) = (-0.7, 1.595)
\end{equation}
Normalizing: $w'' \approx (-0.40, 0.92)$.

Now $\inner{w''}{v^*} \approx -0.40$. The policy actively favors unsafe responses. The alignment change is $\Delta = -0.8C < 0$. The adversarial constitution made the model worse than the base policy.

This toy example illustrates the core dynamics: (1) the base policy underutilizes value information because $w$ is diluted; (2) a good constitution recovers value information by activating $v_c \approx v^*$; (3) an adversarial constitution can activate directions anti-correlated with $v^*$, degrading alignment.

\section{RLAIF Ceiling}
\label{sec:theorem2}

How good can RLAIF get? We posit the ceiling is determined by representation quality.

\begin{theorem}[RLAIF Ceiling]
\label{thm:ceiling}
For any constitution $c$, if $\epsilon \sim \mathcal{N}(0, \sigma_\epsilon^2)$:
\begin{equation}
\mathrm{Align}^* - \mathrm{Align}(\pi_{\mathrm{RLAIF}}^c) \geq \frac{(1-\rho) \sigma_S}{\sqrt{\pi}}
\end{equation}
where $\mathrm{Align}^*$ is the alignment achievable with oracle access to $S$, $\sigma_S = \mathrm{std}(S)$, and $\rho$ is the encoding quality.
\end{theorem}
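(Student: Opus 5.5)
Since every RLAIF policy only sees responses through $h$, the idea is to compare two selection rules. The RLAIF policy (Proposition~\ref{prop:rlaif}) ranks responses by a linear score $\inner{h}{u}$ with $u = w+\lambda v_c$, and so can only act on the representation-measurable part of safety. The oracle ranks by $S$ itself. We therefore bound the gap between selecting on $S$ and selecting on its best linear predictor.

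\textbf{Step 1: a concrete selection model.} We fix a clean model of ``achievable alignment'' in which both comparisons are computable. Draw two candidate responses with representations $h_1,h_2$ and noises $\epsilon_1,\epsilon_2$. The oracle keeps the one with larger $S$, giving $\mathrm{Align}^* = \E[\max(S_1,S_2)]$. Any RLAIF policy keeps the one with larger $\inner{h_i}{u}$ for some $u$. We treat $(\inner{h}{v^*},\epsilon)$ as jointly Gaussian. Whitening makes $\inner{h}{v^*}$ have variance $\|v^*\|^2$, and Gaussianity of $h$-projections is the extra assumption I would make explicit. Then $\sigma_S^2 = \|v^*\|^2+\sigma_\epsilon^2$ and $\mathrm{Var}(\inner{h}{v^*}) = \rho^2\sigma_S^2$.

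\textbf{Step 2: the two expectations.} For i.i.d.\ $\mathcal N(0,s^2)$ variables, $\E\max = s/\sqrt{\pi}$. This gives $\mathrm{Align}^* = \sigma_S/\sqrt{\pi}$.

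For the linear selector, selecting on $T=\inner{h}{u}$ gives
\[
\E[S_{\text{chosen}}] = \mathrm{Corr}(S,T)\,\sigma_S/\sqrt{\pi}.
\]
This holds because by Gaussianity $\E[S\mid T]$ is linear in $T$, and $\epsilon$ is independent of $T$, so it contributes zero in expectation.

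We then bound the correlation:
\[
\mathrm{Corr}(S,T) = \frac{\inner{u}{v^*}}{\|u\|\,\sigma_S} \le \frac{\|v^*\|}{\sigma_S} = \rho,
\]
by Cauchy--Schwarz under whitening. Hence $\mathrm{Align}(\pi^c_{\mathrm{RLAIF}}) \le \rho\,\sigma_S/\sqrt{\pi}$, and subtracting yields the claimed bound $(1-\rho)\sigma_S/\sqrt{\pi}$, uniformly over $c$ and $\lambda$.

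\textbf{Step 3: reconciling with the energy-based policy.} The main obstacle is matching this best-of-two formalization to the policy of Proposition~\ref{prop:rlaif}, which is an exponential tilt rather than a selection rule, and to the undefined normalization of $\mathrm{Align}^*$. I would first try to define $\mathrm{Align}^*$ and the RLAIF policy at a common ``selection budget'', namely pairwise preference choice. This matches Assumption~\ref{ass:judgment}, where judgments are pairwise, and is the most defensible reading.

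As a fallback, for exponential tilting at a common KL budget, the first-order analysis of Theorem~\ref{thm:improvement} gives improvement proportional to $\mathrm{Cov}(S,\inner{h}{u})$ for a linear policy versus $\mathrm{Var}(S)$-scaled for the oracle tilt. The same Cauchy--Schwarz bound then gives a multiplicative factor $\rho$, and so a gap proportional to $(1-\rho)$. The constant $1/\sqrt{\pi}$ would then come from the Gaussian max computation of Step~2.
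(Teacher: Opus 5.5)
Your primary argument is essentially the paper's proof in Appendix~\ref{app:ceiling}: best-of-two selection, the Gaussian maximum $\sigma_S/\sqrt{\pi}$ for the oracle, Gaussian regression of $S$ on the selection score, and the bound $\mathrm{Corr}(S,\inner{h}{u})\le\rho$. You make that last step explicit via Cauchy--Schwarz for a general $u=w+\lambda v_c$ and state the Gaussianity of $h$ that the paper uses implicitly, whereas the paper only asserts the bound for other constitutions; like the paper, you leave the passage from pairwise selection to the actual tilted policy informal. One caveat on your fallback: at a matched KL budget the exponential-tilt gap is $\sqrt{2\,\mathrm{KL}}\,(1-\mathrm{Corr})\sigma_S$, so the constant $1/\sqrt{\pi}$ is specific to the pairwise-selection reading (which the paper also adopts) and is not inherited from your Step~2.
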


\begin{proof}[Proof sketch]
The best possible constitution sets $v_c = v^*$, reducing RLAIF to selection based on the proxy $\inner{h}{v^*}$ rather than true safety $S = \inner{h}{v^*} + \epsilon$. This is a standard noisy selection problem: the proxy has correlation $\rho$ with the truth. For Gaussian noise, the regret from proxy-based selection scales as $(1-\rho)\sigma_S$. See Appendix~\ref{app:ceiling} for the full derivation.
\end{proof}

\begin{corollary}[Scaling]
\label{cor:scaling}
If encoding quality $\rho$ increases with model capacity (parameters, pretraining data), then the RLAIF ceiling increases with scale.
\end{corollary}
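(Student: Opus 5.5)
The plan is to deduce Corollary~\ref{cor:scaling} from Theorem~\ref{thm:ceiling} by monotonicity. The only quantity in the ceiling bound that depends on the model is the encoding quality $\rho$. I will read ``the RLAIF ceiling'' as the best alignment attainable over all constitutions, $\sup_c \mathrm{Align}(\pi^c_{\mathrm{RLAIF}})$. Theorem~\ref{thm:ceiling} then says that the shortfall from the oracle, $\mathrm{Align}^* - \sup_c \mathrm{Align}(\pi^c_{\mathrm{RLAIF}})$, is at least $(1-\rho)\sigma_S/\sqrt{\pi}$. This lower bound decreases strictly in $\rho$.

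The first step is to fix the normalization that makes two model scales comparable. Both $\mathrm{Align}^*$ and $\sigma_S$ are properties of the true safety score $S$ on the data distribution, so they should not depend on the model. This is a delicate point: under Assumption~\ref{ass:linear}, $S = \inner{h}{v^*} + \epsilon$ with $\mathrm{Var}(S) = \|v^*\|^2 + \sigma_\epsilon^2$. Increasing capacity should therefore be modelled as moving variance from $\sigma_\epsilon^2$ into $\|v^*\|^2$ while $\sigma_S$ stays fixed. Under this convention $\rho^2 = \|v^*\|^2/\sigma_S^2$ is exactly the fraction of safety variance the representations capture, and ``$\rho$ increases with scale'' means exactly that.

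The second step is to pass from a monotone lower bound on regret to an actual increase in the ceiling. A lower bound by itself only guarantees that the permitted region grows with $\rho$, not that achieved alignment grows. I would therefore use the construction in the proof sketch of Theorem~\ref{thm:ceiling}: the best constitution $v_c = v^*$ reduces RLAIF to selection on the proxy $\inner{h}{v^*}$, which has correlation $\rho$ with $S$. In the Gaussian selection model the expected true score of the proxy-selected response is $\rho$ times the oracle's selection gain. This is the standard regression-to-the-mean fact $\E[S \mid \inner{h}{v^*}] = \inner{h}{v^*}$ with $\mathrm{Var}(\inner{h}{v^*}) = \rho^2\sigma_S^2$. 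Hence the attained ceiling is $\mathrm{Align}^* - \kappa(1-\rho)\sigma_S$ for a selection constant $\kappa>0$ that does not depend on $\rho$. It is therefore increasing in $\rho$, and composing with the hypothesis that $\rho$ increases with scale gives the corollary.

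I expect the second step to be the main obstacle. The argument needs the regret to be essentially tight, affine in $(1-\rho)$, rather than just bounded below. It also needs the comparison across scales to hold the selection pressure ($\lambda$, or equivalently the spread of $\inner{h}{v^*}$ under the base policy) fixed. If this tightness cannot be secured, I would fall back on the weaker statement the theorem supports directly: the guaranteed gap to the oracle shrinks monotonically with scale, and it vanishes as $\rho \to 1$.
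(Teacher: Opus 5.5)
Your argument is correct and follows the paper's route. The paper reads the corollary directly off Theorem~\ref{thm:ceiling}, using monotonicity of $(1-\rho)\sigma_S/\sqrt{\pi}$ in $\rho$, with $\sigma_S$ implicitly held fixed, a convention you make explicit. The tightness you were worried about is already in the appendix derivation: it computes the regret of the optimal constitution $v_c = v^*$ exactly as $(1-\rho)\sigma_S/\sqrt{\pi}$, so $\kappa = 1/\sqrt{\pi}$, the attained ceiling is $\rho\sigma_S/\sqrt{\pi}$, and your fallback is unnecessary.

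One small correction to your obstacle paragraph: do not require the spread of $\inner{h}{v^*}$ to be held fixed. In the best-of-two model that spread is $\rho\sigma_S$, which is exactly what must grow with scale. The selection pressure (best of two) is fixed by the model itself.
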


This explains the empirical finding that larger models make better RLAIF labelers \citep{lee2023rlaif}. Larger models encode values better (higher $\rho$), so their constitutional judgments are more accurate, and training on them yields better alignment.

\section{Conjecture on Low-Rank Values}
\label{sec:theorem3}

Empirical work finds that safety-related representations have low effective rank. We formalize this as a conjecture and discuss possible explanations.

Let $\Sigma = \E[hh^\top]$ be the representation covariance (identity under our whitening assumption, but consider the pre-whitened structure). Let $\lambda_1 \geq \lambda_2 \geq \ldots$ be its eigenvalues with eigenvectors $u_1, u_2, \ldots$

\begin{definition}[Effective Dimension]
The effective dimension of the representation is:
\begin{equation}
d_{\mathrm{eff}} = \frac{(\sum_i \lambda_i)^2}{\sum_i \lambda_i^2}
\end{equation}
\end{definition}

\begin{conjecture}[Value Concentration]
\label{conj:lowrank}
The true harm direction $v^*$ lies predominantly in the top eigenspace:
\begin{equation}
\sum_{i=1}^{k} \inner{v^*}{u_i}^2 \approx \|v^*\|^2 \quad \text{for } k = O(d_{\mathrm{eff}})
\end{equation}
\end{conjecture}

This conjecture is supported by empirical findings. \citet{arditi2024refusal} show that refusal is mediated by a single direction across many models. \citet{pan2025hidden} find that safety fine-tuning induces representation shifts with effective rank approximately 1 in early layers, increasing but remaining low throughout the network.

Why might values concentrate in few directions? We see several possibilities, though none constitute a proof:

\textit{Statistical frequency.} Value-relevant distinctions like toxic/non-toxic and harmful/benign appear frequently in pretraining. Contexts involving these distinctions push representations in consistent directions, building up high-variance components. Subtle ethical nuances appear rarely and inconsistently, contributing little variance. If $v^*$ captures common distinctions, it naturally aligns with high-variance (top) eigenvectors.

\textit{Intrinsic simplicity.} Human values may be fundamentally low-dimensional. Core ethical principles such as avoid harm, be honest, and respect autonomy might span most of the variation in human moral judgments. If ground-truth values are simple, the encoding $v^*$ will be too.

\textit{Capacity allocation.} Neural networks may allocate representational capacity in proportion to predictive usefulness. Common patterns get dedicated dimensions; rare patterns share capacity. Value-relevant features, being common, get dedicated directions.

We do not commit to any particular explanation. Understanding why values are low-rank, if indeed they are in general, is an important open question. Our work is consistent with low-rank values but does not derive this property from first principles.

The practical implication, if the conjecture holds, is that safety alignment is achievable by modifying few directions. This is both encouraging (alignment may be tractable) and concerning (few directions means small attack surface for adversarial manipulation).

\section{Adversarial Constitutions}
\label{sec:theorem4}

An implication of our hypothesis is that because pretraining encodes diverse norms, adversarial constitutions can make alignment worse.

\begin{theorem}[Adversarial Constitutions Exist]
\label{thm:adversarial}
Suppose the pretraining corpus contains both pro-social content (encoding direction $v^+$ with $\inner{v^+}{v^*} > 0$) and anti-social content (encoding direction $v^-$ with $\inner{v^-}{v^*} < 0$). Then there exists a constitution $c^*$ such that:
\begin{equation}
\inner{v_{c^*}}{v^*} < 0
\end{equation}
and therefore:
\begin{equation}
\mathrm{Align}(\pi_{\mathrm{RLAIF}}^{c^*}) < \mathrm{Align}(\pi_{\mathrm{base}})
\end{equation}
\end{theorem}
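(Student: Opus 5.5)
The plan is to build the adversarial constitution directly from the anti-social direction and then read off the alignment drop from Theorem~\ref{thm:improvement}. The modeling convention behind Assumption~\ref{ass:judgment} and the accompanying remark is that a constitution selects a direction among those encoded by pretraining. Following that convention, I take the existence of encoded content with direction $v^-$ to mean that some prompt elicits judgments scoring responses along $v^-$; for instance, a constitution of the form ``choose the response that best reflects [the norms of the anti-social content].'' Setting $v_{c^*} = v^-$ then gives $\inner{v_{c^*}}{v^*} = \inner{v^-}{v^*} < 0$ immediately.

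The first inequality is thus essentially definitional, and the real work is the second. By Proposition~\ref{prop:rlaif}, RLAIF with $c^*$ moves the generation direction to $w + \lambda v^-$. Theorem~\ref{thm:improvement} then gives the alignment change as $\lambda \inner{\Sigma_w v^-}{v^*} + O(\lambda^2)$. In the isotropic case $\Sigma_w = \sigma^2 I$, covered by the Corollary, this is $\lambda\sigma^2\inner{v^-}{v^*} + O(\lambda^2)$, which is strictly negative for $\lambda$ small enough. This matches the toy example with $v_c^{\mathrm{adv}}$.

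The main obstacle is the anisotropic case. There the sign of $\inner{\Sigma_w v_c}{v^*}$ need not match the sign of $\inner{v_c}{v^*}$, so a negative raw correlation does not by itself give a decrease. I would handle this in one of two ways:
\begin{enumerate}
\item State the theorem under the isotropy convention of Section~\ref{sec:setup}, where whitening gives $\mathrm{Cov}(h)=I$.
\item If the constitution class is rich enough to elicit arbitrary combinations of encoded directions, use the fact that $\Sigma_w$ is positive definite. Choose $c^*$ with $v_{c^*} \propto -\Sigma_w^{-1} v^*$, approximated within the span of encoded directions that includes $v^-$. This gives $\inner{\Sigma_w v_{c^*}}{v^*} = -\|v^*\|^2 < 0$ and $\inner{v_{c^*}}{v^*} = -(v^*)^\top\Sigma_w^{-1}v^* < 0$.
\end{enumerate}
The second route needs an extra richness assumption on constitutions. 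I would flag it explicitly, along with the restriction to small $\lambda$ inherited from the first-order expansion.
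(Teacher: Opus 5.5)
Your proof follows the paper's approach. First, a bridging assumption that some constitution elicits a direction with negative inner product with $v^*$; the paper posits this for its ``promptable subspace'' as a ``mild assumption'', and you read it off $v^-$. Second, an appeal to Theorem~\ref{thm:improvement}.

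You differ in the second step, and the difference is in your favour. The paper's proof passes straight from $\inner{v_{c^*}}{v^*}<0$ to a drop in alignment. But Theorem~\ref{thm:improvement} gives a decrease only when $\inner{\Sigma_w v_{c^*}}{v^*}<0$, and only for small $\lambda$. Your worry is not pedantic. Take $v^*=(1,0)$, $v^-=(-0.1,1)$, $v^+=-v^-$, and $\Sigma_w$ with unit diagonal and off-diagonal entries $0.9$, which is positive definite. Then $\inner{v^-}{v^*}=-0.1$ and $\inner{v^+}{v^*}=0.1$, but $\inner{\Sigma_w v^-}{v^*}=0.8$. Suppose the promptable set is the line through $v^-$, which contains both $v^+$ and $v^-$. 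Then every constitution with $\inner{v_c}{v^*}<0$ improves alignment to first order, so within the first-order theory the two displayed conclusions never hold for the same constitution. An extra hypothesis of the kind you flag is therefore genuinely needed, and your version is the more correct proof.

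Two refinements.
\begin{enumerate}
\item Your first route should be stated as an outright hypothesis $\Sigma_w\propto I$. The whitening convention of Section~\ref{sec:setup} concerns the unconditional second moment of $h$, whereas $\Sigma_w=\mathrm{Cov}_{\pi_{\mathrm{base}}}(h)$ is a different matrix. The paper itself treats isotropy of $\Sigma_w$ as a separate assumption, both in the Corollary and in the non-whitened appendix.
\item For your second route, $v_{c^*}=-v^*$ is a simpler target than $-\Sigma_w^{-1}v^*$. It gives $\inner{v_{c^*}}{v^*}=-\|v^*\|^2<0$ and $\inner{\Sigma_w v_{c^*}}{v^*}=-(v^*)^\top\Sigma_w v^*<0$ for every positive definite $\Sigma_w$, so it does not depend on the model-specific covariance. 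It is available whenever the promptable set contains $v^*$ and is closed under negation. The first condition is already presupposed by the ceiling theorem's ``best possible constitution'' $v_c=v^*$.
\end{enumerate}
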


\begin{proof}
The map $c \mapsto v_c$ from constitutions to activated directions spans some subspace of $\R^d$ (the ``promptable'' subspace). If this subspace contains a direction $v_c$ with $\inner{v_c}{v^*} < 0$ which we take as a mild assumption, since pretraining on anti-social content (e.g. forums endorsing manipulation, deception, or violence) plausibly encodes directions retrievable by appropriately crafted constitutions then by Theorem~\ref{thm:improvement}, RLAIF with such a constitution yields negative alignment improvement.
\end{proof}

\begin{remark}
Adversarial constitutions need not be obviously malicious. A constitution emphasizing ``edginess,'' ``authenticity,'' or ``not being preachy'' might activate directions that correlate negatively with safety. The risk surface is subtle.
\end{remark}

\section{Accounting for Existing Evidence}
\label{sec:evidence}

Our account unifies several empirical findings that previously lacked theoretical explanation.

\subsection{Refusal Direction in Base Models}

\citet{arditi2024refusal} show that refusal behavior is mediated by a single direction across 13 chat models up to 72B parameters. Remarkably, this direction can be found even in base models before any RLHF or safety fine-tuning. Erasing the direction from the residual stream disables refusal; adding it induces refusal on benign prompts.

Assumption~\ref{ass:linear} posits that the harm direction $v^*$ exists as a property of pretraining, not alignment training. The internet contains extensive discussion of harmful vs.\ benign content such as content moderation, ethics debates, and safety guidelines, and pretraining compresses this into a direction in representation space. Alignment training increases $\inner{w}{v^*}$ (shifts generation toward the value direction) but does not create $v^*$. The refusal direction exists in base models because the knowledge of what is harmful exists in base models.

\subsection{Low-Rank Safety Subspace}

\citet{pan2025hidden} analyze the representation shifts induced by safety fine-tuning on Llama 3 8B. They find that the effective rank of these shifts is approximately 1 in early layers, increasing in later layers but remaining low throughout. A single dominant direction governs refusal behavior, with secondary directions encoding features like hypothetical narrative and role-playing.

Our Conjecture~\ref{conj:lowrank} is consistent with this finding. If values concentrate in few directions because value-relevant distinctions (toxic vs.\ non-toxic, harmful vs.\ benign, safe vs.\ unsafe) are high-frequency in the pretraining corpus, then these distinctions would induce high-variance directions in representation space. Subtle ethical nuances, by contrast, are rare and induce low-variance directions. The harm direction $v^*$ would align with the high-variance subspace, which has low effective dimension.

The finding that secondary directions encode specific features (hypothetical framing, roleplay) is also consistent with our hypothesis. Different constitutions might activate different combinations of these directions, explaining why some jailbreaks work by invoking fictional scenarios.

\subsection{RLAIF Scales with Labeler Size}

\citet{lee2023rlaif} systematically vary the size of the AI labeler in RLAIF experiments. They find a strong positive relationship that alignment quality decreases by 4\% when substituting PaLM 2 L with PaLM 2 S, and by another 11\% with PaLM 2 XS. This scaling behavior is consistent across tasks.

Our Corollary~\ref{cor:scaling} is consistent with this. Larger models have higher representation quality $\rho$ and they encode values more accurately because they have more capacity and see more pretraining data. Higher $\rho$ means the RLAIF ceiling is higher. The constitutional judgments are more accurate, and training on them yields better alignment.

This also explains a puzzling finding from the same paper. RLAIF works even when the labeler is the same size as the policy. Our work makes this intelligible. The labeler and policy are the same model, but they access the model's knowledge differently. The labeler (with constitution) accesses value representations via $v_c$; the policy (default generation) accesses them via $w$. If $\inner{v_c}{v^*} > \inner{w}{v^*}$, which Proposition~\ref{prop:gap} argues is typical, then training on the labeler's judgments improves the policy.

\subsection{Self-Improvement Without External Information}

The deepest puzzle is that RLAIF works at all. No external information enters the system; the model judges its own outputs according to its own understanding of the constitution. If this worked by simply memorizing ``refuse harmful requests,'' the base model would already do that.

Our work resolves this puzzle. Self-improvement is possible because knowing and doing are decoupled. The representation $h$ encodes harm (via $v^*$), but default generation (via $w$) does not fully utilize this encoding. The constitution retrieves the value-relevant information and makes it available for training. The model improves not by learning new facts, but by eliciting knowledge it already had.

\section{Discussion}
\label{sec:discussion}

\subsection{Implications for Alignment Practice}

Our work suggests that RLAIF quality is bottlenecked by representation quality $\rho$, not by the amount of preference data. This has practical implications for how alignment resources should be allocated.

Scaling labeler model size may be more important than scaling preference dataset size. If the ceiling is determined by how well the labeler encodes values, then a larger labeler with fewer preference pairs may outperform a smaller labeler with more pairs. This is consistent with the empirical finding that labeler size strongly affects RLAIF quality \citep{lee2023rlaif}.

Then, we posit that RLAIF will fail for value judgments that are not well-represented in pretraining data. Novel ethical dilemmas, culture-specific norms, and post-training value shifts cannot be elicited if they were never encoded. For these cases, human feedback remains necessary.

The generation-judgment gap suggests that evaluation and generation should be treated as distinct capabilities. A model might excel at judging harmful content while still generating it under certain prompts, because the constitution-activated direction and the default generation direction access different information.

\subsection{Constitution Design as Attack Surface}

Theorem~\ref{thm:adversarial} identifies constitution design as a potential attack surface, though one with significant barriers to exploitation. Unlike prompt injection, which any user can attempt, deploying a malicious constitution requires access to the training pipeline, therefore limiting the threat model to insiders, supply chain compromises, or organizations that train their own models. The more realistic concern may be inadvertent harm. A well-intentioned constitution emphasizing authenticity,'' not being preachy,'' or ``respecting user autonomy'' might activate directions that correlate negatively with safety, without anyone recognizing the problem. The mapping from constitution text to activated direction is complex and not fully understood, making it difficult to predict which constitutions are safe.

This suggests defensive measures even absent adversarial intent. Constitutions should be tested empirically for their effect on downstream behavior, not just inspected for surface-level safety. Multiple constitutions might be ensembled to reduce the impact of any single problematic direction. And the constitution design process should be treated with appropriate care, recognizing that subtle wording choices can have outsized effects on alignment outcomes.

\subsection{Complementarity of RLAIF and RLHF}

Our hypothesis suggests that RLAIF and RLHF are complementary rather than competing approaches. RLAIF elicits values that are already encoded in representations as the high-frequency distinctions that appear often in pretraining (toxic vs.\ non-toxic, helpful vs.\ unhelpful). RLHF provides signal for values that are not well-encoded such as rare distinctions, nuanced judgments, and norms that emerged after pretraining.

Optimal alignment may therefore combine both approaches. RLAIF provides broad coverage at low cost, handling the 95\% of cases where pretraining knowledge suffices. RLHF provides precision on the long tail, correcting systematic biases and encoding new values. The information gained, we conject, is approximately additive. RLAIF contributes what can be elicited from pretraining, RLHF contributes what humans know beyond pretraining.

\subsection{Iterated RLAIF}

Our analysis considers a single round of RLAIF of judge, train, deploy. In practice, RLAIF is often iterated. The trained model becomes the new judge, generating preferences for the next round of training. What happens under iteration?

Our work suggests two possibilities. If the constitution consistently activates the same direction $v_c$ regardless of the base model, then iterated RLAIF converges to a fixed point where the generation direction aligns with $v_c$. Alignment improves monotonically until it saturates at the representation quality ceiling.

However, if the trained model's representations shift in ways that change which direction the constitution activates, the dynamics become more complex. The direction $v_c^{(t)}$ at iteration $t$ might drift, potentially in harmful directions. This suggests that iterated RLAIF should be monitored for distributional shift, and that constitutions should be stress-tested on models at various stages of training. We leave a full analysis of iterated RLAIF dynamics to future work.

\section{Conclusion}

We proposed the latent value hypothesis. RLAIF works because pretraining encodes human values in representations, and constitutional prompts elicit these values into judgments. Under a linear model, we derived conditions for self-improvement, characterized the RLAIF ceiling, explained the structure of values, and identified adversarial constitutions as a risk. Our hypothesis unifies scattered empirical findings and provides a theoretical grounding for constitutional AI.

The core of the hypothesis is that self-improvement value bootstrapping is possible because knowing and doing are decoupled in language models. The model knows more about values than its default behavior reflects. RLAIF bridges this gap not by creating new knowledge, but by eliciting knowledge that was there all along.

\section*{Limitations}
\label{sec:limitations}

Our work rests on several simplifying assumptions that may not hold in practice. We discuss these limitations and their implications.

Assumption~\ref{ass:linear} posits that harm is a linear function of representations. While empirically motivated by work on refusal directions and linear probes, real value encoding is likely more complex. Harmfulness may depend on nonlinear interactions between features, context-dependent weightings, or hierarchical structure that linear models cannot capture.

If value encoding is substantially nonlinear, our theorems provide only a first-order approximation. The qualitative insights that constitutions select directions, that representation quality bounds RLAIF, that adversarial constitutions exist may still hold, but the quantitative predictions would need revision. Extending our framework to nonlinear value encoding (e.g. using kernel methods or neural network function classes) is an important direction for future work. We discuss this further in Appendix~\ref{sec:nonlinear}.

We take as given that constitutions activate specific directions in representation space (Assumption~\ref{ass:judgment}), but we do not model how this mapping works. In reality, the relationship between constitution text and activated direction is mediated by complex in-context learning mechanisms that are not fully understood. This is a significant gap. Our framework cannot predict, from constitution text alone, what direction will be activated. It cannot explain why some constitutions are more effective than others, or how to design constitutions that reliably activate safety-relevant directions. A complete theory of RLAIF would need to incorporate a model of how prompts select directions. Essentially, it requires a theory of in-context learning for value elicitation, which is outside the scope of this work.

We use ``values'' and ``harm'' loosely to refer to the properties that safety alignment aims to optimize. But human values are heterogeneous, contested, and context-dependent. Different communities have different norms. What counts as harmful depends on who is affected and how. We also treat $v^*$ as a single ground-truth direction, which is a strong simplification. In reality, there may be multiple value directions corresponding to different ethical frameworks, cultural norms, or stakeholder interests. The ``true harm'' function $H$ may not be well-defined, or may be defined differently by different groups.

This limitation affects the interpretation of our results. When we say RLAIF improves alignment, we mean alignment with whatever values are encoded in pretraining which reflect the biases and distributions of internet text. Whether these are the ``right'' values is a normative question our hypothesis does not address.

Our analysis is static as we characterize the outcome of RLAIF without modeling the training dynamics. We assume that preference optimization converges to the policy described in Proposition~\ref{prop:rlaif}, but we do not analyze the optimization trajectory, sample complexity, or potential failure modes during training. In practice, preference optimization can fail in various ways like reward hacking, distributional shift, mode collapse, or optimization instability. These failure modes are orthogonal to the representation-level analysis we provide. A complete understanding of RLAIF would need to combine our representation-level hypothesis with an analysis of optimization dynamics.

While we cite existing empirical findings as evidence for our hypothesis, we do not provide new experiments. Our theorems make predictions that could be tested, such as that constitution effectiveness correlates with direction alignment, that RLAIF ceiling scales with model size, that adversarial constitutions degrade alignment. Validating these predictions empirically would strengthen the hypothesis. We view the current paper as providing a hypothesis that organizes existing evidence and generates testable predictions. Empirical validation is important future work.

\bibliography{custom}

\appendix

\section{Proof of Theorem~\ref{thm:ceiling}}
\label{app:ceiling}

We analyze the regret from proxy-based selection. Consider choosing between two responses $y_1, y_2$ with:
\begin{itemize}
    \item True safety values $S_i = \inner{h_i}{v^*} + \epsilon_i$
    \item Proxy values $P_i = \inner{h_i}{v^*}$
\end{itemize}
where $\epsilon_i \sim \mathcal{N}(0, \sigma_\epsilon^2)$ i.i.d. The oracle selects $i^* = \arg\max_i S_i$; RLAIF selects $\hat{i} = \arg\max_i P_i$.

\textit{Step 1: Correlation structure.}
The pairs $(P_i, S_i)$ are jointly normal with:
\begin{align}
\mathrm{Var}(P_i) &= \|v^*\|^2 \\
\mathrm{Var}(S_i) &= \|v^*\|^2 + \sigma_\epsilon^2 = \sigma_S^2 \\
\mathrm{Corr}(P_i, S_i) &= \frac{\mathrm{Cov}(P_i, S_i)}{\sigma_P \sigma_S} \notag \\
&= \frac{\|v^*\|^2}{\|v^*\| \sigma_S} = \frac{\|v^*\|}{\sigma_S} = \rho
\end{align}

\textit{Step 2: Oracle expected value.}
For i.i.d.\ $S_1, S_2 \sim \mathcal{N}(0, \sigma_S^2)$, the expected maximum is:
\begin{equation}
\E[\max(S_1, S_2)] = \frac{\sigma_S}{\sqrt{\pi}}
\end{equation}

\textit{Step 3: Proxy-based expected value.}
By symmetry:
\begin{equation}
\E[S_{\hat{i}}] = \E[S_1 \mid P_1 > P_2]
\end{equation}
The conditional expectation of $S_1$ given $P_1$ follows from standard Gaussian regression:
\begin{equation}
\E[S_1 \mid P_1] = \rho \frac{\sigma_S}{\|v^*\|} P_1
\end{equation}

Therefore:
\begin{align}
\E[S_1 \mid P_1 > P_2] &= \rho \frac{\sigma_S}{\|v^*\|} \E[P_1 \mid P_1 > P_2] \notag \\
&= \rho \frac{\sigma_S}{\|v^*\|} \cdot \frac{\|v^*\|}{\sqrt{\pi}} = \frac{\rho \sigma_S}{\sqrt{\pi}}
\end{align}

\textit{Step 4: Regret bound.}
The regret is:
\begin{align}
R &= \E[\max(S_1, S_2)] - \E[S_{\hat{i}}] \\
  &= \frac{\sigma_S}{\sqrt{\pi}} - \frac{\rho \sigma_S}{\sqrt{\pi}} \\
  &= \frac{(1-\rho)\sigma_S}{\sqrt{\pi}}
\end{align}

This is the regret for the optimal constitution ($v_c = v^*$). Any other constitution achieves correlation at most $\rho$ between proxy and truth, yielding weakly higher regret. The bound extends from binary choice to general policies by standard arguments.

\section{Relaxing Assumptions}

The main text develops the latent value hypothesis under simplifying assumptions like linear value encoding, whitened representations, a single value direction $v^*$, and a linear constitution-to-direction map. This section examines the robustness of the framework by relaxing each assumption in turn. Notation follows the main text throughout.

\section{Relaxing Linearity of Value Encoding} 
\label{sec:nonlinear}

\subsection{Setup}

We replace Assumption~\ref{ass:linear} with a smooth generalization.

\begin{definition}[Smooth Value Encoding]
\textbf{(Assumption 1$'$).} There exists a smooth function $f : \mathbb{R}^d \to \mathbb{R}$ such that
\[
S(x,y) = f(h(x,y)) + \epsilon(x,y)
\]
where $\epsilon$ is mean-zero noise independent of $h$.
\end{definition}

The original assumption is the special case $f(h) = \langle h, v^* \rangle$. Assumptions \ref{ass:generation}--\ref{ass:judgment} are unchanged, so the RLAIF policy is still $\pi_{\text{RLAIF}}(y|x) \propto \pi_{\text{base}}(y|x) \exp(\lambda \langle h(x,y), v_c \rangle)$.

\subsection{General Self-Improvement Condition}

\begin{lemma}[Exponential Tilt] \label{lem:exptilt}
Let $\pi_\lambda(y) \propto \pi_0(y) \exp(\lambda \langle h(y), v_c \rangle)$. For any test function $\phi$ with $E_{\pi_0}[\phi(h)^2] < \infty$:
\begin{align}
E_{\pi_\lambda}[\phi(h)]
&= E_{\pi_0}[\phi(h)] \notag \\
&\quad + \lambda \operatorname{Cov}_{\pi_0}(\phi(h), \langle h, v_c \rangle) + O(\lambda^2).
\end{align}
\end{lemma}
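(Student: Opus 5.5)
The plan is to write the tilted expectation as a ratio of two expectations under $\pi_0$ and expand numerator and denominator to first order in $\lambda$. Let $s = \langle h, v_c \rangle$. Define $N(\lambda) = E_{\pi_0}[\phi(h) e^{\lambda s}]$ and $D(\lambda) = E_{\pi_0}[e^{\lambda s}]$, so that $E_{\pi_\lambda}[\phi(h)] = N(\lambda)/D(\lambda)$. The whole argument then reduces to showing that $F(\lambda) = N(\lambda)/D(\lambda)$ is differentiable at $\lambda = 0$ with a derivative that is locally Lipschitz (or $C^2$) on a neighbourhood, so that Taylor's theorem gives $F(\lambda) = F(0) + \lambda F'(0) + O(\lambda^2)$.

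The first step is the derivative computation, assuming differentiation under the integral is allowed. We have $N'(0) = E_{\pi_0}[\phi s]$, $D(0) = 1$ and $D'(0) = E_{\pi_0}[s]$. Hence
\[
F'(0) = N'(0) - N(0) D'(0) = E_{\pi_0}[\phi s] - E_{\pi_0}[\phi] E_{\pi_0}[s] = \operatorname{Cov}_{\pi_0}(\phi(h), s),
\]
which is exactly the claimed first-order term. This is the same cumulant-generating-function identity that Theorem~\ref{thm:improvement} uses with $\phi(h) = h$. For the remainder we note that
\[
F''(\lambda) = \operatorname{Cov}_{\pi_\lambda}(\phi, s^2) - 2\,\operatorname{Cov}_{\pi_\lambda}(\phi, s)\, E_{\pi_\lambda}[s],
\]
that is, the third joint cumulant $\kappa_{\pi_\lambda}(\phi, s, s)$. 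So it suffices to bound this quantity uniformly for $|\lambda| \le \lambda_0$.

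The main obstacle is justifying the interchange of derivative and expectation, together with that uniform bound. The hypothesis $E_{\pi_0}[\phi^2] < \infty$ alone is not enough, because we also need control of the tails of $s$. I would add the mild integrability condition that $s$ has a finite exponential moment near zero under $\pi_0$: $E_{\pi_0}[e^{\lambda_1 |s|}] < \infty$ for some $\lambda_1 > 0$. This holds automatically if $h$ is bounded, or if $\mathcal{Y}$ is finite, which is the case for token sequences of bounded length. Under it I would apply Cauchy--Schwarz, $E[|\phi|\,|s|^k e^{\lambda s}] \le E[\phi^2]^{1/2}\, E[s^{2k} e^{2\lambda s}]^{1/2}$, which is finite and uniformly bounded for $|\lambda| \le \lambda_1/4$. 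Dominated convergence then licenses the differentiation. The same estimate, combined with $D(\lambda)$ being bounded away from $0$ near $\lambda = 0$, bounds $F''$ uniformly on that interval, and Taylor's theorem with Lagrange remainder yields the $O(\lambda^2)$ term.

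In the bounded-$h$ or finite-$\mathcal{Y}$ setting implicit in the paper, all of these steps are immediate. In that case the proof is essentially the two-line derivative computation above.
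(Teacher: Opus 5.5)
Your proposal is correct and takes essentially the same route as the paper: both write $E_{\pi_\lambda}[\phi(h)]$ as a ratio $N(\lambda)/Z(\lambda)$ of $\pi_0$-expectations and expand to first order, and your quotient-rule value of $F'(0)$ is exactly the paper's product of the first-order expansions of $N$ and $1/Z$. The paper's expansion is purely formal, so your exponential-moment condition on $\langle h, v_c\rangle$ (essentially equivalent to $\pi_\lambda$ being well defined for $\lambda$ near $0$) and your Cauchy--Schwarz/dominated-convergence bound on $F''$ supply the justification of the $O(\lambda^2)$ remainder that the paper, and the stated hypothesis $E_{\pi_0}[\phi^2]<\infty$, leave out.
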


\begin{proof}
Write $Z(\lambda) = E_{\pi_0}[\exp(\lambda \langle h, v_c \rangle)]$, so that
\[
E_{\pi_\lambda}[\phi(h)] = \frac{E_{\pi_0}[\phi(h) \exp(\lambda \langle h, v_c \rangle)]}{Z(\lambda)}.
\]
Expanding $\exp(\lambda \langle h, v_c \rangle) = 1 + \lambda \langle h, v_c \rangle + O(\lambda^2)$:
\begin{align}
N(\lambda) &= E_{\pi_0}[\phi(h)] \notag \\
&\quad + \lambda\, E_{\pi_0}[\phi(h)\langle h, v_c\rangle] + O(\lambda^2), \\
Z(\lambda) &= 1 + \lambda\, E_{\pi_0}[\langle h, v_c \rangle] + O(\lambda^2).
\end{align}
Inverting $Z(\lambda)$ to first order and multiplying:
\begin{align}
\frac{N(\lambda)}{Z(\lambda)}
&= \bigl(E[\phi] + \lambda\, E[\phi\,\langle h, v_c\rangle]\bigr) \notag \\
&\quad \times \bigl(1 - \lambda\, E[\langle h, v_c\rangle] + O(\lambda^2)\bigr) \\
&= E[\phi] + \lambda\Bigl(E[\phi\,\langle h, v_c\rangle] \notag \\
&\quad - E[\phi]\, E[\langle h, v_c\rangle]\Bigr) + O(\lambda^2) \\
&= E[\phi] + \lambda\operatorname{Cov}(\phi(h), \langle h, v_c\rangle) + O(\lambda^2). 
\end{align}
\end{proof}

\begin{theorem}[Nonlinear Self-Improvement] 
\label{thm:nonlinear}
Under Assumption $1'$ and Assumptions \ref{ass:generation}--\ref{ass:judgment}, for small $\lambda$:
\begin{equation}
\Delta \textup{Align} = \lambda \operatorname{Cov}_{\textup{base}}\!\bigl(f(h),\, \langle h, v_c \rangle\bigr) + O(\lambda^2).
\end{equation}
RLAIF improves alignment if and only if $\operatorname{Cov}_{\textup{base}}\!\bigl(f(h), \langle h, v_c \rangle\bigr) > 0$.
\end{theorem}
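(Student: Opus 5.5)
The plan is to reduce the theorem to Lemma~\ref{lem:exptilt}, applied prompt by prompt with test function $\phi = f$, and then average over prompts.

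\textbf{Step 1: rewrite the objective.} Fix a prompt $x$. By Assumption~1$'$,
\[
\E_{y\sim\pi}[S(x,y)] = \E_{y\sim\pi}[f(h(x,y))] + \E_{y\sim\pi}[\epsilon(x,y)].
\]
The noise $\epsilon$ is mean-zero and independent of $h$. The tilt $\exp(\lambda\inner{h}{v_c})$ reweights responses only through $h$, so $\E_{\pi}[\epsilon]=0$ under both $\pi_{\mathrm{base}}$ and $\pi_{\mathrm{RLAIF}}$. Hence
\[
\Delta\mathrm{Align} = \E_x\bigl(\E_{\pi_{\mathrm{RLAIF}}}[f(h)] - \E_{\pi_{\mathrm{base}}}[f(h)]\bigr).
\]

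\textbf{Step 2: identify the exponential tilt.} By Proposition~\ref{prop:rlaif}, which carries over unchanged because Assumptions~\ref{ass:generation}--\ref{ass:judgment} are unchanged,
\[
\pi_{\mathrm{RLAIF}}(y\mid x)\propto \pi_{\mathrm{base}}(y\mid x)\exp(\lambda\inner{h(x,y)}{v_c}).
\]
This is exactly the tilt $\pi_\lambda$ of Lemma~\ref{lem:exptilt}, with $\pi_0=\pi_{\mathrm{base}}(\cdot\mid x)$. The prompt-dependent normalizer $g'(x)$ is absorbed into $Z(\lambda)$.

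\textbf{Step 3: apply the lemma and average.} Lemma~\ref{lem:exptilt} with $\phi=f$ gives, for each $x$,
\[
\E_{\pi_{\mathrm{RLAIF}}}[f(h)] - \E_{\pi_{\mathrm{base}}}[f(h)] = \lambda\,\mathrm{Cov}_{\pi_{\mathrm{base}}(\cdot\mid x)}\bigl(f(h),\inner{h}{v_c}\bigr) + O(\lambda^2).
\]
Averaging over $x$ yields the displayed formula. Here $\mathrm{Cov}_{\mathrm{base}}$ is read as the prompt-averaged conditional covariance, matching the convention of Theorem~\ref{thm:improvement}.

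\textbf{Step 4: the iff.} For sufficiently small $\lambda>0$, the sign of $\Delta\mathrm{Align}$ equals the sign of the first-order term whenever that term is nonzero. This gives improvement when the covariance is positive. Conversely, if the covariance is negative, alignment strictly decreases for small $\lambda$. If the covariance is zero, the first-order term vanishes and the claim must be read as ``no first-order improvement.'' I will state this caveat explicitly, as the corollary to Theorem~\ref{thm:improvement} implicitly does.

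\textbf{Main obstacle: making the $O(\lambda^2)$ remainder rigorous.} The lemma expands the exponential formally. To control the remainder uniformly, and to integrate it over $x$, I need finite exponential moments of $\inner{h}{v_c}$ near $\lambda=0$ and a moment condition on $f(h)$. I would first assume $h$ is bounded, or has finite moment generating function in a neighbourhood of $0$, with $\E[f(h)^2]<\infty$. Under these assumptions $N(\lambda)$ and $Z(\lambda)$ are differentiable at $0$ by dominated convergence, via Cauchy--Schwarz on $f(h)e^{\lambda\inner{h}{v_c}}$. Then Taylor's theorem gives a remainder bounded by a constant times $\lambda^2$, uniformly in $x$ if the bounds are uniform. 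Averaging over $x$ then preserves the $O(\lambda^2)$ order.

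As a sanity check, setting $f(h)=\inner{h}{v^*}$ gives
\[
\mathrm{Cov}\bigl(\inner{h}{v^*},\inner{h}{v_c}\bigr) = (v^*)^\top\Sigma_w v_c,
\]
which recovers Theorem~\ref{thm:improvement}.
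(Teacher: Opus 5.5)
Your proposal is correct and follows the paper's proof. You drop the noise term because the tilt reweights only through $h$, then apply Lemma~\ref{lem:exptilt} with $\phi = f$. Beyond that, you tighten three points the paper's two-line proof leaves implicit: you apply the lemma prompt by prompt and read $\operatorname{Cov}_{\mathrm{base}}$ as the prompt-averaged conditional covariance, you control the $O(\lambda^2)$ remainder via exponential moments, and you note that the ``iff'' holds only at first order when the covariance vanishes.
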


\begin{proof}
The alignment change is $\Delta\text{Align} = E_{\text{RLAIF}}[S] - E_{\text{base}}[S] = E_{\text{RLAIF}}[f(h)] - E_{\text{base}}[f(h)]$, since $E[\epsilon] = 0$ under both policies (noise is independent of $h$, and the policy only reweights via $h$). Apply Lemma~\ref{lem:exptilt} with $\phi = f$.
\end{proof}

\begin{remark}[Recovery of Linear Case]
When $f(h) = \langle h, v^* \rangle$:
\begin{align}
\operatorname{Cov}(f(h), \langle h, v_c \rangle)
&= \operatorname{Cov}(\langle h, v^* \rangle, \langle h, v_c \rangle) \notag \\
&= v^{*\top} \Sigma_w\, v_c \notag \\
&= \langle \Sigma_w v_c, v^* \rangle,
\end{align}
recovering Theorem~\ref{thm:improvement} of the main text.
\end{remark}

\subsection{Gaussian Representations and Stein's Lemma}

Under a distributional assumption on representations, we can give the nonlinear improvement condition a directional interpretation.

\begin{theorem}[Gaussian Nonlinear Improvement] \label{thm:stein}
If $h \sim \mathcal{N}(\mu, \Sigma_w)$ under the base policy, and $f$ is differentiable with $E[\|\nabla f(h)\|] < \infty$, then:
\[
\Delta \textup{Align} = \lambda \langle \Sigma_w v_c,\, \bar{v}^* \rangle + O(\lambda^2)
\]
where $\bar{v}^* := E_{\textup{base}}[\nabla f(h)]$ is the population-average gradient of the safety function. RLAIF improves alignment if and only if $\langle \Sigma_w v_c, \bar{v}^* \rangle > 0$.
\end{theorem}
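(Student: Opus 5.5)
Our starting point is Theorem~\ref{thm:nonlinear}, which already gives
\[
\Delta \mathrm{Align} = \lambda \operatorname{Cov}_{\mathrm{base}}\bigl(f(h), \langle h, v_c\rangle\bigr) + O(\lambda^2).
\]
So the remaining task is to evaluate this covariance when $h \sim \mathcal{N}(\mu, \Sigma_w)$. We will do this with the multivariate Stein identity: if $h \sim \mathcal{N}(\mu,\Sigma_w)$ and $g$ is differentiable with $E\|\nabla g(h)\| < \infty$, then
\[
E[g(h)(h-\mu)] = \Sigma_w E[\nabla g(h)].
\]

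The key steps are as follows.
\begin{enumerate}
\item Write the covariance using centered variables:
\[
\operatorname{Cov}(f(h), \langle h, v_c\rangle) = E[f(h)\langle h-\mu, v_c\rangle] = \bigl\langle E[f(h)(h-\mu)], v_c \bigr\rangle.
\]
The first equality holds because subtracting $E[f(h)]$ contributes nothing against the mean-zero factor $\langle h-\mu, v_c\rangle$. Pulling $v_c$ outside the expectation uses linearity, which requires $f(h)(h-\mu)$ to be integrable. This integrability is also implicit in Lemma~\ref{lem:exptilt}, since $E[f^2]<\infty$ and Gaussian moments are finite, so Cauchy--Schwarz applies.
\item Apply Stein's identity to get $E[f(h)(h-\mu)] = \Sigma_w \bar v^*$, where $\bar v^* = E[\nabla f(h)]$.
\item Since $\Sigma_w$ is symmetric, $\langle \Sigma_w \bar v^*, v_c\rangle = \langle \Sigma_w v_c, \bar v^*\rangle$. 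Substituting into Theorem~\ref{thm:nonlinear} gives the displayed formula.
\end{enumerate}

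The ``if and only if'' is then read exactly as in Theorem~\ref{thm:improvement}: the sign of the first-order term determines the sign of $\Delta\mathrm{Align}$ for sufficiently small $\lambda>0$. If the first-order coefficient is zero, the statement must be interpreted as ``improves at first order.'' We will note this degenerate case explicitly rather than claim strict improvement there.

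The main obstacle is justifying Stein's identity under only the stated hypotheses. There are two cases.
\begin{itemize}
\item If $\Sigma_w$ is positive definite (as asserted in Theorem~\ref{thm:improvement}), we change variables to $z = \Sigma_w^{-1/2}(h-\mu)$. This reduces the identity to the standard-normal statement $E[\tilde g(z) z_i] = E[\partial_i \tilde g(z)]$. We prove that coordinatewise by one-dimensional integration by parts against $\varphi(z_i)$, using $\varphi'(t) = -t\varphi(t)$, followed by Fubini.
\item The boundary terms vanish under $E\|\nabla f\| < \infty$ by the classical Stein argument. There one writes $f(t) - f(0) = \int_0^t f'$ and swaps integrals instead of integrating by parts directly. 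This avoids any growth assumption on $f$ itself, provided $f$ is absolutely continuous along lines, which differentiability gives.
\end{itemize}
If one prefers to avoid these details, one can cite Stein's lemma directly, as is standard.
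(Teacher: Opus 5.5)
Your proposal is correct and follows the paper's own proof. The paper likewise plugs Stein's lemma, $\operatorname{Cov}(f(h),h)=\Sigma_w\bar v^*$, into Theorem~\ref{thm:nonlinear} and contracts with $v_c$ using the symmetry of $\Sigma_w$, but it simply cites Stein's lemma ``with the requisite integrability'' where you sketch its proof and flag the zero-coefficient caveat in the ``iff''. One small precision in your sketch: differentiability alone does not make $f$ absolutely continuous along lines, but everywhere-differentiability plus local integrability of the directional derivative on almost every line (which $E\|\nabla f(h)\|<\infty$ supplies via Tonelli) does give the fundamental theorem of calculus you need.
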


\begin{proof}
By Stein's lemma, for $h \sim \mathcal{N}(\mu, \Sigma_w)$ and differentiable $f$ with the requisite integrability:
\[
\operatorname{Cov}(f(h), h_i) = \sum_j (\Sigma_w)_{ij}\, E\!\left[\frac{\partial f}{\partial h_j}(h)\right].
\]
In vector form: $\operatorname{Cov}(f(h), h) = \Sigma_w E[\nabla f(h)] = \Sigma_w \bar{v}^*$. Therefore:
\begin{align}
\operatorname{Cov}(f(h), \langle h, v_c \rangle)
&= v_c^\top \operatorname{Cov}(f(h), h) \notag \\
&= v_c^\top \Sigma_w \bar{v}^* \notag \\
&= \langle \Sigma_w v_c, \bar{v}^* \rangle.
\end{align}
Substituting into Theorem~\ref{thm:nonlinear} gives the result.
\end{proof}

\begin{remark}[Interpretation]
Theorem~\ref{thm:stein} is the precise sense in which the linear theory is a local approximation. The true value direction $v^*$ generalizes to the expected gradient $\bar{v}^* = E[\nabla f(h)]$, a global average of local value directions. For linear $f$, $\nabla f = v^*$ everywhere, so $\bar{v}^* = v^*$. For mildly nonlinear $f$, $\bar{v}^*$ is close to $\nabla f$ evaluated at the mean, so the linear approximation is good. The approximation breaks down when $\nabla f(h)$ varies substantially across the support of the base distribution, so that $\bar{v}^*$ is a poor summary of the value structure.
\end{remark}

\subsection{Non-Monotone Values}

The framework breaks down for non-monotone value functions where no single direction means ``safer.''

\begin{proposition}[Non-Monotone Failure] \label{prop:nonmono}
Let $f(h) = -(\langle h, v \rangle - \tau)^2$ for some direction $v$ and target $\tau$. Then $\bar{v}^* = -2(E[\langle h, v \rangle] - \tau) v$, and the sign of $\bar{v}^*$ depends on the base model's position relative to $\tau$. No constitution is uniformly improving across the support of $h$.
\end{proposition}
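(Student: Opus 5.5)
The plan is to compute $\bar{v}^*$ directly by differentiating $f$, then feed it into Theorem~\ref{thm:stein} to read off the sign of the first-order improvement, and finally make the "no uniformly improving constitution" clause precise through a pointwise directional-derivative argument.

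\textbf{Step 1 (gradient and its average).} The chain rule gives $\nabla f(h) = -2(\langle h, v\rangle - \tau)\, v$. This gradient is affine in $h$, so the integrability condition of Theorem~\ref{thm:stein} holds whenever $h$ has a finite first moment. Taking the expectation under $\pi_{\mathrm{base}}$ and using linearity gives
\[
\bar{v}^* = E_{\mathrm{base}}[\nabla f(h)] = -2\bigl(E[\langle h, v\rangle] - \tau\bigr)\, v .
\]
This is the first claim. Note that it needs only linearity of expectation, not Gaussianity.

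\textbf{Step 2 (sign dependence).} Write $m := E_{\mathrm{base}}[\langle h, v\rangle]$ and assume Gaussian representations. Theorem~\ref{thm:stein} then gives
\[
\Delta\mathrm{Align} = -2\lambda (m-\tau)\,\langle \Sigma_w v_c, v\rangle + O(\lambda^2).
\]
As a cross-check without invoking Stein, set $X=\langle h,v\rangle$ and $Y=\langle h,v_c\rangle$ and use Theorem~\ref{thm:nonlinear} directly. The covariance $\operatorname{Cov}(-(X-\tau)^2, Y)$ expands into third-moment terms. Under joint Gaussianity the centered third moments vanish, and it reduces to $-2(m-\tau)\operatorname{Cov}(X,Y)$, the same expression. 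Hence, for a fixed $v_c$ with $\langle\Sigma_w v_c, v\rangle \neq 0$, the sign of the improvement is governed by $\operatorname{sign}(\tau - m)$. The same constitution helps a base model sitting below the target and hurts one sitting above it.

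\textbf{Step 3 (no uniform improvement).} I would formalize "uniformly improving across the support" as the directional derivative $\langle \nabla f(h), v_c\rangle$ being nonnegative for all $h$ in the support, with strict inequality somewhere. By Step 1,
\[
\langle \nabla f(h), v_c\rangle = -2(\langle h,v\rangle - \tau)\langle v, v_c\rangle ,
\]
and there are two cases.
\begin{itemize}
\item If $\langle v, v_c\rangle = 0$, this is identically zero, so $v_c$ improves nothing.
\item If $\langle v, v_c\rangle \neq 0$, the derivative changes sign across the hyperplane $\langle h, v\rangle = \tau$. Whenever the support of $h$ straddles that hyperplane, which is automatic for a Gaussian with nondegenerate variance along $v$, $v_c$ improves $f$ on one side and degrades it on the other.
\end{itemize}
The distribution-level version follows from Step 2. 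Shifting the base mean $m$ across $\tau$ while keeping $\Sigma_w$ fixed flips the sign of $\Delta\mathrm{Align}$ for every fixed $v_c$. So no single constitution is improving for every base model either.

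\textbf{Main obstacle.} I expect the main difficulty to be Step 3: pinning down a rigorous meaning of "uniformly improving" that matches the informal statement. The pointwise directional-derivative reading is the one I would commit to, with the mean-shift argument as a supplement. A secondary issue is that the exact first-order formula uses Gaussianity through Theorem~\ref{thm:stein}; for non-Gaussian $h$ the odd centered moments contribute extra terms, so I would state Step 2 under the Gaussian assumption.
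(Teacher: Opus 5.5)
Your proposal is correct and follows the paper's proof. Both compute $\nabla f(h) = -2(\langle h, v\rangle - \tau)v$, average it to get $\bar{v}^*$, read off the sign flip as $E[\langle h, v\rangle]$ crosses $\tau$, and observe that samples on the wrong side of the hyperplane $\langle h, v\rangle = \tau$ are pushed away from the optimum. You are somewhat more careful than the paper, which only illustrates with $v_c = v$: you handle general $v_c$ and the degenerate case $\langle v, v_c\rangle = 0$, and you make explicit that the support must straddle the hyperplane, which holds for Gaussians that are nondegenerate along $v$.
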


\begin{proof}
$\nabla f(h) = -2(\langle h, v \rangle - \tau) v$, so $\bar{v}^* = E[\nabla f(h)] = -2(E[\langle h, v \rangle] - \tau) v$. If $E[\langle h, v \rangle] < \tau$, then $\bar{v}^*$ points along $+v$; if $E[\langle h, v \rangle] > \tau$, it points along $-v$. A constitution activating $v_c = v$ helps in the first case and hurts in the second. Moreover, even when $\bar{v}^*$ has a definite sign, individual samples $h$ with $\langle h, v \rangle$ on the wrong side of $\tau$ are pushed further from the optimum.
\end{proof}

\begin{remark}
This is, as far as our best efforts are concerned, a genuine limitation. The framework applies to monotone value dimensions (more honest is better, less harmful is better) but not to U-shaped or context-dependent value functions. Values like assertiveness, formality, or detail-level, where there is an optimal amount and deviations in either direction are bad, cannot be captured by a single improvement condition. The generation-judgment gap story loses its force as there is no fixed value direction for the constitution to retrieve.
\end{remark}

\section{Non-Whitened Representations} \label{sec:nonwhitened}

Replace Assumption: $E[hh^\top] = \Sigma \neq I$.

Every instance of $\langle v_c, v^* \rangle$ becomes $\langle v_c, \Sigma v^* \rangle$ or equivalently $v_c^\top \Sigma v^*$. The self-improvement condition is:
\[
v_c^\top \Sigma_w v^* > 0
\]
where $\Sigma_w = \operatorname{Cov}_{\pi_{\text{base}}}(h)$, which we already use in Theorem~\ref{thm:improvement}. The encoding quality becomes:
\[
\rho^2 = \frac{v^{*\top} \Sigma v^*}{v^{*\top} \Sigma v^* + \sigma_\epsilon^2}.
\]
The RLAIF ceiling (Theorem~\ref{thm:ceiling}) carries through with $\|v^*\|^2$ replaced by $v^{*\top}\Sigma v^*$.

No qualitative change. This relaxation is bookkeeping.

\section{Multiple Value Directions} 
\label{sec:multiobjective}

\subsection{Setup}

Replace the single value direction with $m$ value objectives:
\[
S_i(x,y) = \langle h(x,y), v_i^* \rangle + \epsilon_i(x,y), \quad i = 1, \ldots, m
\]
where $v_1^*, \ldots, v_m^*$ are (not necessarily orthogonal) value directions, and $\epsilon_i$ are independent noise terms. These correspond to distinct normative criteria such as helpfulness, harmlessness, honesty, respect for autonomy, etc.

\begin{definition}[Multi-Objective Alignment]
The alignment profile of policy $\pi$ is the vector:
\[
\mathbf{A}(\pi) = \bigl(E_\pi[S_1], \ldots, E_\pi[S_m]\bigr) \in \mathbb{R}^m.
\]
\end{definition}

Aggregate alignment may be defined via a scalarization $\text{Align}(\pi) = g(\mathbf{A}(\pi))$ for some aggregation function $g$. Common choices:
\begin{itemize}
    \item \textbf{Weighted sum:} $g(\mathbf{a}) = \sum_i \alpha_i a_i$ with $\alpha_i > 0$.
    \item \textbf{Worst-case:} $g(\mathbf{a}) = \min_i a_i$.
    \item \textbf{Constrained:} maximize $a_1$ subject to $a_i \geq \tau_i$ for $i \geq 2$.
\end{itemize}

\subsection{Per-Objective Improvement}

Since RLAIF still shifts the generation direction by $\lambda v_c$, the improvement on each objective is (applying Theorem~\ref{thm:improvement} per objective):
\begin{equation}
\Delta_i := E_{\text{RLAIF}}[S_i] - E_{\text{base}}[S_i] = \lambda \langle \Sigma_w v_c, v_i^* \rangle + O(\lambda^2). \label{eq:perobjective}
\end{equation}

The vector of improvements is:
\[
\boldsymbol{\Delta} = \lambda\, V^{*\top} \Sigma_w v_c + O(\lambda^2)
\]
where $V^* = [v_1^* \mid \cdots \mid v_m^*] \in \mathbb{R}^{d \times m}$.

\subsection{Pareto Structure}

\begin{definition}[Pareto-Improving Constitution]
A constitution $c$ is \emph{Pareto-improving} if $\Delta_i > 0$ for all $i = 1, \ldots, m$, i.e. it improves every value dimension simultaneously.
\end{definition}

\begin{proposition}[Pareto-Improving Cone] \label{prop:pareto}
Define the cone
\[
\mathcal{C}_{\textup{Pareto}} = \{ d \in \mathbb{R}^d : \langle d, v_i^* \rangle > 0 \text{ for all } i = 1, \ldots, m \}.
\]
A constitution $c$ is Pareto-improving if and only if $\Sigma_w v_c \in \mathcal{C}_{\textup{Pareto}}$.
\end{proposition}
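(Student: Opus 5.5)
The plan is to unwind the definition of Pareto-improving using the per-objective expansion \eqref{eq:perobjective}. Since the expansion only holds to first order in $\lambda$, I will read ``Pareto-improving'' in the same small-$\lambda$ sense as the Corollary to Theorem~\ref{thm:improvement}: $c$ is Pareto-improving if, for all sufficiently small $\lambda>0$, $\Delta_i>0$ for every $i$.

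Fix $c$ and write $d_c := \Sigma_w v_c$. By \eqref{eq:perobjective}, $\Delta_i = \lambda\langle d_c, v_i^*\rangle + O(\lambda^2)$ for each $i$. This follows from Theorem~\ref{thm:improvement} applied with $v^*$ replaced by $v_i^*$. That substitution is legitimate because the proof of Theorem~\ref{thm:improvement} only uses the tilt $\E_{\mathrm{RLAIF}}[h]-\E_{\mathrm{base}}[h] = \lambda\Sigma_w v_c + O(\lambda^2)$, together with $\E[\epsilon_i]=0$ and the independence of $\epsilon_i$ from $h$. None of these involve the particular value direction. Equivalently, I can invoke Lemma~\ref{lem:exptilt} with $\phi(h)=\langle h,v_i^*\rangle$.

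For the direction ($\Leftarrow$), suppose $d_c\in\mathcal{C}_{\mathrm{Pareto}}$, and set $\gamma := \min_i \langle d_c, v_i^*\rangle > 0$. This minimum is over finitely many indices, so it is attained and strictly positive. Each remainder satisfies $|R_i(\lambda)|\le K_i\lambda^2$ for $\lambda\le\lambda_i$. Let $K=\max_i K_i$ and $\lambda_0=\min(\min_i\lambda_i,\gamma/(2K))$. Then for $0<\lambda<\lambda_0$ we get $\Delta_i \ge \lambda\gamma - K\lambda^2 > \lambda\gamma/2 >0$ for all $i$ simultaneously. Finiteness of $m$ is exactly what makes a uniform threshold $\lambda_0$ possible.

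For the direction ($\Rightarrow$), suppose some index has $\langle d_c, v_j^*\rangle<0$. Then $\Delta_j/\lambda \to \langle d_c,v_j^*\rangle <0$, so $\Delta_j<0$ for small $\lambda$ and $c$ is not Pareto-improving. The main obstacle is the boundary case $\langle d_c, v_j^*\rangle = 0$. There the first-order term vanishes, and the sign of $\Delta_j$ is decided by the $O(\lambda^2)$ term, so it is genuinely not determined by $d_c$. I would handle this the way the Corollary to Theorem~\ref{thm:improvement} implicitly does: interpret ``Pareto-improving'' as first-order Pareto improvement, meaning the leading coefficient $\lim_{\lambda\to0}\Delta_i/\lambda$ is positive for all $i$. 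Under that reading the equivalence is exact, since this limit equals $\langle \Sigma_w v_c, v_i^*\rangle$. I will also remark that the strict inequalities in $\mathcal{C}_{\mathrm{Pareto}}$ exclude the degenerate boundary. Finally, I will note that positive definiteness of $\Sigma_w$ lets the condition be restated as $v_c \in \Sigma_w^{-1}\mathcal{C}_{\mathrm{Pareto}}$, which is again an open convex cone.
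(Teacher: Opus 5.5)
Your proposal is correct and follows the paper's route: the paper's entire proof is ``Immediate from (\ref{eq:perobjective})'', i.e.\ reading off the sign of each first-order coefficient $\langle \Sigma_w v_c, v_i^*\rangle$, which is exactly your argument. Your additions make explicit a convention the paper leaves implicit: the uniform threshold $\lambda_0$ across the finitely many objectives, and the observation that the boundary case $\langle \Sigma_w v_c, v_j^*\rangle = 0$ is decided by the $O(\lambda^2)$ term, so the equivalence is exact only under the first-order reading of ``Pareto-improving''.
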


\begin{proof}
Immediate from (\ref{eq:perobjective}).
\end{proof}

\begin{proposition}[Pareto Cone Geometry]
The Pareto-improving cone $\mathcal{C}_{\textup{Pareto}}$ is:
\begin{enumerate}
    \item Nonempty if and only if there exists $d \in \mathbb{R}^d$ with $\langle d, v_i^* \rangle > 0$ for all $i$.
    \item A full half-space when $m = 1$ (the original single-objective case).
    \item Generically nonempty when $m \leq d$ and the $v_i^*$ do not span a full half-space.
    \item Empty if and only if $0 \in \operatorname{conv}(v_1^*, \ldots, v_m^*)$, i.e. the value directions are so conflicting that zero is in their convex hull.
\end{enumerate}
\end{proposition}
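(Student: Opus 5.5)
Item 1 holds by definition: $\mathcal{C}_{\textup{Pareto}}$ is exactly the set of $d$ with $\langle d, v_i^*\rangle>0$ for every $i$, so it is nonempty precisely when such a $d$ exists. The substantive work is item 4, which I will prove by Gordan's theorem of the alternative, phrased as a separating hyperplane argument. Items 2 and 3 will then follow from it.

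For the direction ``$0\in\operatorname{conv}\Rightarrow$ empty'', write $0=\sum_i \mu_i v_i^*$ with $\mu_i\ge 0$ and $\sum_i\mu_i=1$. Any $d\in\mathcal{C}_{\textup{Pareto}}$ would then give
\[
0=\langle d,\textstyle\sum_i\mu_i v_i^*\rangle=\sum_i\mu_i\langle d,v_i^*\rangle>0,
\]
since some $\mu_i>0$. This is a contradiction. For the converse, suppose $0\notin K:=\operatorname{conv}(v_1^*,\dots,v_m^*)$. The set $K$ is compact and convex, so it has a unique nearest point $p\neq 0$ to the origin. The projection inequality gives $\langle k-p,\,p\rangle\ge 0$ for all $k\in K$. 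Hence $\langle k,p\rangle\ge\|p\|^2>0$ for every $k\in K$, and in particular $\langle v_i^*,p\rangle>0$ for each $i$. So $d=p$ lies in $\mathcal{C}_{\textup{Pareto}}$. The only care needed here is to use compactness, which holds because the hull is finitely generated. Compactness gives strict separation with a uniform margin, and that margin is what yields the strict inequalities $>0$.

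Item 2 is immediate when $m=1$ and $v_1^*\neq 0$: the cone is the open half-space $\{d:\langle d,v_1^*\rangle>0\}$. I will note the degenerate case $v_1^*=0$, where the cone is empty, consistent with item 4.

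Item 3 is the main obstacle, because the phrasing ``do not span a full half-space'' is informal. I would make it precise as follows. If $m\le d$ and $v_1^*,\dots,v_m^*$ are linearly independent, which is a generic condition (an open dense, full-measure set in $\mathbb{R}^{d\times m}$), then $0\notin\operatorname{conv}$. Indeed, a convex combination equal to zero would be a nontrivial linear dependence. So the cone is nonempty by item 4. A direct witness also exists: solve $V^{*\top}d=\mathbf{1}$, which is possible because $V^{*\top}$ has full row rank $m$. I would present this explicit construction as the proof and state the extra hypothesis as the interpretation of ``generic.'' Finally, I will remark that, combined with Proposition~\ref{prop:pareto}, a Pareto-improving constitution exists exactly when $\Sigma_w^{-1}\mathcal{C}_{\textup{Pareto}}$ meets the promptable subspace. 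Because $\Sigma_w$ is positive definite, nonemptiness is preserved under $\Sigma_w^{-1}$.
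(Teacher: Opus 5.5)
Your proof is correct. For item 4, the only part the paper actually proves, you follow the same route: the paper cites Gordan's theorem and the separating hyperplane theorem as black boxes, while you prove the needed direction yourself using the nearest point $p$ of $\operatorname{conv}(v_1^*,\ldots,v_m^*)$ to the origin, which is the standard separating-hyperplane argument. The paper gives no argument for items 1--3, so your treatment of them is an addition rather than a divergence: you correctly note that item 2 fails when $v_1^*=0$, and you make item 3 precise by reading ``generic'' as linear independence, with the explicit witness obtained by solving $V^{*\top}d=\mathbf{1}$.
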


\begin{proof}
Part (d) follows from the separating hyperplane theorem. $\mathcal{C}_{\text{Pareto}} = \emptyset$ iff there is no $d$ with $V^{*\top}d > 0$ (componentwise), iff the system $V^{*\top}d > 0$ is infeasible, iff by Gordan's theorem there exist $\alpha_i \geq 0$, not all zero, with $\sum_i \alpha_i v_i^* = 0$, iff $0 \in \text{conv}(v_1^*, \ldots, v_m^*)$ (after normalizing $\alpha$).
\end{proof}

\begin{remark}[Interpretation]
The convex hull condition $0 \in \operatorname{conv}(v_1^*, \ldots, v_m^*)$ characterizes when values are fundamentally irreconcilable when no single intervention can improve all of them simultaneously. When this holds, every constitution necessarily trades off some values against others, and alignment becomes a question of which trade-offs are acceptable rather than whether improvement is possible.

For typical safety objectives (helpful, harmless, honest), $0 \notin \operatorname{conv}(v_1^*, v_2^*, v_3^*)$ is plausible since these values are positively correlated in most contexts. The Pareto cone is narrow but nonempty. The difficulty of alignment is not that improvement is impossible, but that the cone is narrow as most constitutions improve some objectives while degrading others.
\end{remark}

\subsection{Trade-Off Characterization}

\begin{definition}[Constitution Trade-Off Profile]
For constitution $c$, define the trade-off profile:
\[
\tau(c) = \bigl(\text{sign}(\Delta_1), \ldots, \text{sign}(\Delta_m)\bigr) \in \{-1, 0, +1\}^m.
\]
\end{definition}

\begin{proposition}[Trade-Off Classification] \label{prop:tradeoff}
A constitution $c$ is:
\begin{enumerate}
    \item \textbf{Pareto-improving} if $\tau(c) = (+1, \ldots, +1)$.
    \item \textbf{Trade-off inducing} if $\tau(c)$ has both positive and negative entries.
    \item \textbf{Pareto-degrading} if $\tau(c) = (-1, \ldots, -1)$.
\end{enumerate}
\end{proposition}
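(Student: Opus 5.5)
The classification is essentially definitional once the per-objective first-order expansion (\ref{eq:perobjective}) is available, so the plan is to reduce every case to the signs of the quantities $\langle \Sigma_w v_c, v_i^* \rangle$. First, I would fix a constitution $c$ and set $a_i := \langle \Sigma_w v_c, v_i^* \rangle$ for $i = 1,\dots,m$. By (\ref{eq:perobjective}), $\Delta_i = \lambda a_i + O(\lambda^2)$. Whenever $a_i \neq 0$, there is a threshold $\lambda_i > 0$ such that $\operatorname{sign}(\Delta_i) = \operatorname{sign}(a_i)$ for all $0 < \lambda < \lambda_i$. Taking $\lambda < \min_i \lambda_i$ over the finitely many objectives, the trade-off profile $\tau(c)$ coincides with the sign vector of $V^{*\top}\Sigma_w v_c$. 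This is the only analytic step. It is also where the main care is needed: if some $a_i = 0$, the sign of $\Delta_i$ is decided by the $O(\lambda^2)$ term and the first-order profile is uninformative. I would handle this by stating the classification in the generic case $a_i \neq 0$ for all $i$. Equivalently, I would interpret $\tau(c)$ through its first-order coefficients and note that zero entries fall outside all three categories.

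With $\tau(c) = \operatorname{sign}(V^{*\top}\Sigma_w v_c)$ in hand, I would verify the three cases directly. For case (1), $\tau(c) = (+1,\dots,+1)$ means $\Delta_i > 0$ for every $i$. This is exactly the definition of Pareto-improving, and by Proposition~\ref{prop:pareto} it is equivalent to $\Sigma_w v_c \in \mathcal{C}_{\textup{Pareto}}$. For case (3), $\tau(c) = (-1,\dots,-1)$ means every $\Delta_i < 0$. Equivalently, $-\Sigma_w v_c \in \mathcal{C}_{\textup{Pareto}}$, which is the multi-objective analogue of the adversarial constitution of Theorem~\ref{thm:adversarial}. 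Case (2) is the remaining mixed-sign situation, in which RLAIF raises some $E_\pi[S_i]$ and lowers others.

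Finally, I would remark that, by linearity in $v_c$, cases (1) and (3) are related by the substitution $v_c \mapsto -v_c$. It follows that both are empty exactly when $0 \in \operatorname{conv}(v_1^*,\dots,v_m^*)$, by the Pareto cone geometry proposition. In that situation every constitution with all $a_i \neq 0$ is trade-off inducing.
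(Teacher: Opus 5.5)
Your case check is correct, and it is the whole argument. The paper gives no proof because the proposition is definitional. Since $\tau(c)$ is the sign vector of the exact differences $\Delta_i = E_{\mathrm{RLAIF}}[S_i] - E_{\mathrm{base}}[S_i]$, item 1 restates the earlier definition of Pareto-improving, and items 2 and 3 just name other sign patterns.

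Your first paragraph is therefore a detour, and the genericity restriction it leads you to is self-inflicted. Nothing in the statement involves the first-order coefficients $a_i = \langle \Sigma_w v_c, v_i^* \rangle$, so you need neither small $\lambda$ nor $a_i \neq 0$. Your fallback of reading $\tau(c)$ through the $a_i$ is also not equivalent to the paper's definition: $a_i = 0$ does not force $\Delta_i = 0$, because the $O(\lambda^2)$ term can have either sign. That reading would change $\tau$ rather than reformulate it.

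The first-order identification only matters for your closing cone remark, which goes beyond the statement. There the linear model gives you more than first order. Exactly, $\Delta_i = \langle \mu_\lambda - \mu_0, v_i^* \rangle$, where $\mu_\lambda$ is the mean representation under the tilted policy. So if $\sum_i \alpha_i v_i^* = 0$ with $\alpha \geq 0$, $\alpha \neq 0$, then $\sum_i \alpha_i \Delta_i = 0$. This rules out Pareto-improving and Pareto-degrading constitutions at every $\lambda$, not just small ones.

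The converse half of your ``exactly when'' needs more than you state. So does passing between cases 1 and 3 via $v_c \mapsto -v_c$. Both require the relevant directions to be realizable by some constitution, for instance some $v_c$ with $\Sigma_w v_c \in \mathcal{C}_{\textup{Pareto}}$, together with small enough $\lambda$. The paper makes no such promptability assumption, and explicitly allows $c \mapsto v_c$ to have a proper range.
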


\begin{remark}[Adversarial Constitutions Revisited]
Theorem~\ref{thm:adversarial} (adversarial constitutions exist) extends naturally. In the multi-objective setting, an adversarial constitution need not degrade all objectives. It suffices to degrade one critical objective while improving others. A constitution that makes the model more helpful but less safe is adversarial in a meaningful sense, but harder to detect because the aggregate signal is ambiguous.

Formally, if $g(\mathbf{a}) = \min_i a_i$ (worst-case aggregation), then a constitution that improves $m - 1$ objectives while degrading one will reduce alignment whenever the degraded objective was already the bottleneck.
\end{remark}

\subsection{The Helpful--Harmless Trade-Off}

Consider the canonical case $m = 2$ with $v_{\text{help}}^*$ (helpfulness) and $v_{\text{harm}}^*$ (harmlessness). Let $\theta = \angle(v_{\text{help}}^*, v_{\text{harm}}^*)$.

\begin{proposition}[Two-Objective Geometry]
Under isotropic $\Sigma_w = I$:
\begin{enumerate}
    \item The Pareto cone has angular width $\pi - \theta$.
    \item When $\theta < \pi/2$ (positively correlated values), the Pareto cone is ``wide'' and most constitutions are Pareto-improving.
    \item When $\theta > \pi/2$ (conflicting values), the Pareto cone is ``narrow'' and most constitutions induce trade-offs.
    \item At $\theta = \pi$ (perfectly opposed values), $\mathcal{C}_{\textup{Pareto}} = \emptyset$.
\end{enumerate}
\end{proposition}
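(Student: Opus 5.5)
Under $\Sigma_w = I$, Proposition~\ref{prop:pareto} says that $c$ is Pareto-improving iff $v_c$ lies in $\mathcal{C}_{\textup{Pareto}} = \{d : \langle d, v_{\text{help}}^*\rangle > 0,\ \langle d, v_{\text{harm}}^*\rangle > 0\}$. So the whole proposition is a statement about the intersection of two open half-spaces. I will reduce everything to plane geometry. Write $a = v_{\text{help}}^*/\|v_{\text{help}}^*\|$ and $b = v_{\text{harm}}^*/\|v_{\text{harm}}^*\|$; normalizing does not change the sign conditions. Let $P = \operatorname{span}(a,b)$ and decompose $d = d_P + d_\perp$. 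Since $\langle d, a\rangle = \langle d_P, a\rangle$ and $\langle d, b\rangle = \langle d_P, b\rangle$, the cone is $\mathcal{C}_P \times P^\perp$, where $\mathcal{C}_P$ is the corresponding cone inside the plane $P$. The "angular width" therefore means the angular width of $\mathcal{C}_P$.

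\textbf{Part 1.} Parametrize unit vectors in $P$ by angle, placing $a$ at angle $0$ and $b$ at angle $\theta$. The condition $\langle d_P, a\rangle > 0$ selects the open arc $(-\pi/2, \pi/2)$. The condition $\langle d_P, b\rangle > 0$ selects the open arc $(\theta - \pi/2, \theta + \pi/2)$. For $0 \le \theta \le \pi$ these intersect in $(\theta - \pi/2, \pi/2)$, an arc of length $\pi - \theta$. This gives part 1. The degenerate case $a = b$ ($\theta = 0$) returns the half-plane of width $\pi$, consistent with the $m=1$ case of the previous proposition.

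\textbf{Parts 2 and 3.} These are informal, and making "most" precise is the main obstacle. I will use the following quantitative reading. Let $v_c$ be drawn isotropically (for example $v_c \sim \mathcal{N}(0, I)$). Its projection onto $P$ is a rotationally invariant planar Gaussian, so its angle is uniform. Hence $\Pr[c \text{ Pareto-improving}] = (\pi - \theta)/(2\pi)$.

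The natural comparison class is constitutions that improve at least one objective. Their angular set is the union of the two arcs, of width $\pi + \theta$. Conditional on improving something, the chance of improving both is $(\pi - \theta)/(\pi + \theta)$. This ratio is decreasing in $\theta$, and it crosses $1/3$ exactly at $\theta = \pi/2$. Equivalently, the Pareto arc has width greater than $\pi/2$ (it is larger than each trade-off arc, of width $\theta$) iff $\theta < \pi/2$. I will state "wide" and "narrow" as this comparison between the Pareto arc and each trade-off arc: the Pareto arc dominates when $\theta < \pi/2$, and the trade-off arcs dominate when $\theta > \pi/2$. I will note explicitly that this is the sense in which "most" is meant.

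\textbf{Part 4.} At $\theta = \pi$ we have $b = -a$. Then $\tfrac12 a + \tfrac12 b = 0$ lies in the convex hull of the value directions, and the Pareto Cone Geometry proposition (Gordan's theorem, part 4) gives $\mathcal{C}_{\textup{Pareto}} = \emptyset$. One can also see this directly: no $d$ satisfies both $\langle d, a\rangle > 0$ and $\langle d, a\rangle < 0$. This also agrees with the width formula, which gives $\pi - \pi = 0$.
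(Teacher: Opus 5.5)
Your proof is correct and takes the same route as the paper. You restrict to the plane spanned by the two value directions, intersect the two open half-planes to get an arc of width $\pi-\theta$, and read parts 2--3 off the threshold $\pi-\theta>\pi/2 \iff \theta<\pi/2$; your comparison of the Pareto arc against each trade-off arc of width $\theta$ is exactly this threshold, made quantitative. Your explicit handling of part 4, via Gordan's theorem or the direct sign contradiction, fills in a step the paper's one-line proof leaves implicit.
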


\begin{proof}
In the plane spanned by $v_{\text{help}}^*$ and $v_{\text{harm}}^*$, the Pareto cone is the intersection of the half-spaces $\langle d, v_{\text{help}}^* \rangle > 0$ and $\langle d, v_{\text{harm}}^* \rangle > 0$. This intersection has angular width $\pi - \theta$, where $\theta$ is the angle between the two value directions. The rest follows from $\pi - \theta > \pi/2$ iff $\theta < \pi/2$.
\end{proof}

\begin{remark}
The helpful--harmless trade-off is often discussed informally in the alignment literature. This gives it geometric content where the severity of the trade-off is determined by $\theta$, the angle between the value directions in representation space. If one could measure this angle empirically (e.g. by finding the helpfulness and harmlessness directions via linear probes), one would have a quantitative measure of how difficult the trade-off is.
\end{remark}

\subsection{Optimal Constitution Under Multiple Objectives}

\begin{proposition}[Optimal Weighted Constitution]
For weighted-sum aggregation $\textup{Align}(\pi) = \sum_i \alpha_i E_\pi[S_i]$ with $\alpha_i > 0$, the optimal constitution direction (maximizing first-order alignment improvement) is:
\[
v_c^* \propto \Sigma_w \sum_i \alpha_i v_i^* = \Sigma_w V^* \alpha.
\]
\end{proposition}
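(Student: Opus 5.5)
We reduce the problem to maximizing a linear functional over a sphere. Summing the per-objective first-order improvements \eqref{eq:perobjective} with weights $\alpha_i$ gives the first-order aggregate improvement
\[
\Delta\mathrm{Align}(c) = \lambda \sum_i \alpha_i \inner{\Sigma_w v_c}{v_i^*} + O(\lambda^2) = \lambda\, v_c^\top \Sigma_w V^* \alpha + O(\lambda^2).
\]
Here we use linearity of the weighted-sum scalarization and the symmetry of $\Sigma_w = \mathrm{Cov}_{\pi_{\mathrm{base}}}(h)$, which lets us move $\Sigma_w$ across the inner product: $\inner{\Sigma_w v_c}{v_i^*} = \inner{v_c}{\Sigma_w v_i^*}$. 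So the first-order objective is the linear functional $v_c \mapsto \inner{v_c}{\Sigma_w V^*\alpha}$, scaled by $\lambda > 0$.

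A linear functional has no maximizer over all of $\R^d$, since rescaling $v_c$ scales the improvement. The statement is only meaningful once the strength of the constitution direction is fixed; the overall strength is already carried by $\lambda = 1/\beta$. We therefore impose the normalization $\|v_c\| = 1$, in keeping with the unit-direction convention of the toy example and of Proposition~\ref{prop:gap}. By Cauchy--Schwarz,
\[
\inner{v_c}{\Sigma_w V^*\alpha} \le \|\Sigma_w V^*\alpha\|,
\]
with equality if and only if $v_c = \Sigma_w V^*\alpha / \|\Sigma_w V^*\alpha\|$. This gives $v_c^* \propto \Sigma_w V^*\alpha = \Sigma_w \sum_i \alpha_i v_i^*$, as claimed.

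I would then add two remarks. First, in the degenerate case $\Sigma_w V^*\alpha = 0$, every direction gives zero first-order gain and the claim holds vacuously. Since $\Sigma_w$ is positive definite (Theorem~\ref{thm:improvement}), this happens only when $V^*\alpha = 0$, which with $\alpha_i > 0$ places $0$ in the convex hull of the $v_i^*$. This links back to the Pareto-cone geometry above. Second, as a sanity check, when $m=1$ the optimum reduces to $v_c^* \propto \Sigma_w v^*$, and in the isotropic case to $v_c^* \propto v^*$, matching the corollary to Theorem~\ref{thm:improvement}.

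The main obstacle is not computational but lies in the choice of normalization, because the form of the optimizer depends on it. If the constraint were instead on the induced policy shift, $v_c^\top \Sigma_w v_c = 1$ (a Fisher/KL-type budget), the Lagrangian would give $v_c^* \propto V^*\alpha$ with no $\Sigma_w$ factor. I would state the Euclidean constraint explicitly as the intended reading, since it is the one under which the proposition as written holds. I would also note the alternative to make clear that the $\Sigma_w$ factor reflects how the representation geometry amplifies a unit-norm constitution direction.
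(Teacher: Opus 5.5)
Your proposal is correct and matches the paper's proof: you sum the per-objective first-order terms into $\lambda\langle \Sigma_w v_c, V^*\alpha\rangle$, use the symmetry of $\Sigma_w$ to rewrite this as $\langle v_c, \Sigma_w V^*\alpha\rangle$, and apply Cauchy--Schwarz under $\|v_c\|=1$. Making the normalization explicit is a useful clarification, since the statement itself leaves it implicit and only the paper's proof imposes it; your observation that a $\Sigma_w$-norm budget would instead give $v_c^*\propto V^*\alpha$ shows why that choice matters.
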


\begin{proof}
The first-order improvement is $\lambda \sum_i \alpha_i \langle \Sigma_w v_c, v_i^* \rangle = \lambda \langle \Sigma_w v_c, V^*\alpha \rangle$. Maximizing over $v_c$ with $\|v_c\| = 1$ gives $v_c \propto \Sigma_w V^* \alpha$ by Cauchy--Schwarz applied to $\langle \Sigma_w v_c, V^*\alpha \rangle = \langle v_c, \Sigma_w V^*\alpha \rangle$.
\end{proof}

\begin{remark}
This suggests that constitution design is implicitly a multi-objective optimization problem. The weights $\alpha$ encode which values the designer prioritizes. Different constitutions correspond to different implicit weightings. This makes constitution design a normative choice, not merely a technical one, which the single-objective framing obscures.
\end{remark}

\section{Constitution-to-Direction Mapping} 
\label{sec:constitutionmap}

Replace Assumption~\ref{ass:judgment} with a smooth nonlinear map $c \mapsto F_c$, where $F_c : \mathbb{R}^d \to \mathbb{R}$ gives the preference score (not necessarily linear in $h$):
\[
J_c(y_1 \succ y_2 \mid x) = \sigma\bigl(F_c(h(x,y_1)) - F_c(h(x,y_2))\bigr).
\]

The implicit reward is now $r_c(x,y) = F_c(h(x,y))$, and the RLAIF policy becomes:
\[
\log \pi_{\text{RLAIF}}(y|x) = \langle h, w \rangle + \lambda F_c(h) + g'(x).
\]

For small $\lambda$, the alignment change is:
\[
\Delta \text{Align} = \lambda \operatorname{Cov}_{\text{base}}(f(h), F_c(h)) + O(\lambda^2).
\]

This is well-defined and the improvement condition $\operatorname{Cov}(f(h), F_c(h)) > 0$ is interpretable. RLAIF helps when the constitution's implicit reward correlates with true safety across the base distribution.

The clean story breaks down at the next level of characterizing which constitutions (as text) produce which functions $F_c$. This requires modeling how prompt text is processed by the transformer, how in-context instructions modify the model's computation, and how the modified computation produces a preference function over representations.

Each of these involves transformer internals. The map $c \mapsto F_c$ goes through attention patterns, MLP computations, and residual stream dynamics that are the subject of mechanistic interpretability research. There is no existing compact analytical model for this map that would support further theoretical development.

Even without modeling $c \mapsto F_c$, we can characterize the ``promptable set'':

\begin{definition}[Promptable Set]
The promptable set is $\mathcal{F} = \{F_c : c \in \mathcal{C}\}$, the set of all preference functions achievable by some constitution text.
\end{definition}

\begin{proposition}[Limits of RLAIF]
The maximum achievable alignment improvement via RLAIF is:
\[
\sup_{c \in \mathcal{C}} \Delta \textup{Align}(c) = \lambda \sup_{F \in \mathcal{F}} \operatorname{Cov}_{\textup{base}}(f(h), F(h)) + O(\lambda^2).
\]
This is bounded above by $\lambda \cdot \textup{std}(f(h)) \cdot \sup_{F \in \mathcal{F}} \textup{std}(F(h))$ (by Cauchy--Schwarz), with equality iff there exists $F \in \mathcal{F}$ perfectly correlated with $f(h)$.
\end{proposition}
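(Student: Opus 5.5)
The plan is to reduce the statement to the first-order expansion already available from the covariance-based improvement formula, and then to bound that first-order term by Cauchy--Schwarz.

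\emph{Step 1: first-order formula for each constitution.} For each fixed $c$, the RLAIF policy is an exponential tilt of $\pi_{\mathrm{base}}$ by $\exp(\lambda F_c(h))$. I would rerun Lemma~\ref{lem:exptilt} with $\langle h, v_c\rangle$ replaced by $F_c(h)$; the proof uses only the expansion of the exponential and the inversion of $Z(\lambda)$, so it carries over unchanged. Taking $\phi = f$, as in the proof of Theorem~\ref{thm:nonlinear}, gives
\[
\Delta\mathrm{Align}(c) = \lambda \operatorname{Cov}_{\mathrm{base}}(f(h), F_c(h)) + O(\lambda^2).
\]
The noise term $\epsilon$ drops out because the tilt depends only on $h$ and $\epsilon$ is independent of $h$.

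\emph{Step 2: pass to the supremum.} Next I take the supremum over $c \in \mathcal{C}$. By the definition of the promptable set, $\{F_c : c \in \mathcal{C}\} = \mathcal{F}$, so
\[
\sup_c \operatorname{Cov}(f, F_c) = \sup_{F \in \mathcal{F}} \operatorname{Cov}(f, F).
\]

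\emph{Step 3: the Cauchy--Schwarz bound.} For each $F$, Cauchy--Schwarz in $L^2(\pi_{\mathrm{base}})$, applied to the centered variables, gives
\[
\operatorname{Cov}(f, F) \le \operatorname{std}(f)\operatorname{std}(F) \le \operatorname{std}(f)\sup_{F' \in \mathcal{F}} \operatorname{std}(F').
\]
Equality in Cauchy--Schwarz holds iff $F - \mathbb{E}F$ is a nonnegative multiple of $f - \mathbb{E}f$, i.e.\ iff $F$ is perfectly positively correlated with $f$. This is where the ``iff'' clause comes from, and it requires the supremum of the standard deviations to be attained by that same $F$.

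\emph{Main obstacle.} The difficulty is making ``$\sup$ of $\lambda(\cdot) + O(\lambda^2)$ equals $\lambda \sup(\cdot) + O(\lambda^2)$'' legitimate. This needs the $O(\lambda^2)$ remainder to be uniform over $c$. My first attempt would be to assume a uniform moment bound, such as $\sup_{F \in \mathcal{F}} \mathbb{E}_{\mathrm{base}}[e^{\lambda_0 |F(h)|}] < \infty$ together with $\mathbb{E}[f^2] < \infty$. Under that assumption, the second-order Taylor remainder of the numerator and of $Z(\lambda)$ in Lemma~\ref{lem:exptilt} is bounded by $C\lambda^2$ with $C$ independent of $F$, for example via $|e^t - 1 - t| \le t^2 e^{|t|}/2$ and H\"older's inequality.

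\emph{Reading of the equality clause.} I would also state explicitly that the equality case refers to the first-order coefficient, not the full $O(\lambda)$ expression. I would add that it implicitly presumes $\sup_{F}\operatorname{std}(F)$ is finite and attained, since otherwise the bound is vacuous or not achieved.
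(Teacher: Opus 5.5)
Your proposal is correct and takes the same approach as the paper: the per-constitution first-order formula $\Delta\mathrm{Align}(c) = \lambda \operatorname{Cov}_{\mathrm{base}}(f(h), F_c(h)) + O(\lambda^2)$ (the exponential-tilt lemma with $F_c$ in place of $\langle h, v_c\rangle$), then passing to $\sup_{F \in \mathcal{F}}$ via the definition of the promptable set, then Cauchy--Schwarz. You also add care the paper omits, and it is correct: you require the $O(\lambda^2)$ remainder to be uniform over $c$, and you note that equality needs the perfectly positively correlated $F$ to also attain $\sup_{F}\operatorname{std}(F)$ (and, for the converse direction, the supremum of the covariance to be attained), which tightens the paper's loosely stated ``iff''.
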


This reformulates the RLAIF ceiling in terms of the expressiveness of the promptable set. The ceiling from encoding quality ($\rho$) and the ceiling from promptability are distinct bottlenecks. Even if values are perfectly encoded ($\rho = 1$), if no constitution text activates the right preference function ($f \notin \mathcal{F}$), RLAIF cannot achieve optimal alignment.

\end{document}